\newtheorem{theorem}{Theorem}[section]
\newtheorem{proposition}{Proposition}[section]
\theoremstyle{definition}
\newtheorem{definition}{Definition}[section]
\newtheorem{remark}{Remark}[section]
\title{Almost Surely Stable Deep Dynamics}
\author{%
  Nathan P. Lawrence \\
  Department of Mathematics\\
  University of British Columbia\\
  \texttt{lawrence@math.ubc.ca} \\
   \And
   Philip D. Loewen \\
  Department of Mathematics\\
  University of British Columbia\\
   \texttt{loew@math.ubc.ca} \\
   \And
   Michael G. Forbes \\
   Honeywell Process Solutions \\
   \texttt{michael.forbes@honeywell.com} \\
   \And
   Johan U. Backstr\"{o}m \\
   Backstrom Systems Engineering Ltd. \\
   \texttt{johan.u.backstrom@gmail.com} \\
   \And
   R. Bhushan Gopaluni \\
   Department of Chemical and Biological Engineering \\
   University of British Columbia \\
   \texttt{bhushan.gopaluni@ubc.ca} \\
}
\DeclareMathOperator*{\E}{\mathbb{E}} 
\newcommand{\RR}{\mathbb{R}} 
\newcommand{\Nat}{\mathbb{N}} 
\newcommand{\EE}[2]{\E\left[#1 \middle| #2\right]}
\newcommand{\Exp}[1]{\mathbb{E}\left[ #1 \right]}
\newcommand{\Prob}[1]{\mathbb{P}\left[ #1 \right]}
\newcommand{\pp}[2]{p\left(#1 \middle| #2\right)} 
\newcommand{\norm}[1]{\left\lVert#1\right\rVert}
\newcommand{\pdv}[2]{\frac{\partial #1}{\partial #2}}
\begin{document}

\maketitle



\begin{abstract}
We introduce a method for learning provably stable deep neural network based dynamic models from observed data. Specifically, we consider discrete-time stochastic dynamic models, as they are of particular interest in practical applications such as estimation and control. However, these aspects exacerbate the challenge of guaranteeing stability. Our method works by embedding a Lyapunov neural network into the dynamic model, thereby inherently satisfying the stability criterion. To this end, we propose two approaches and apply them in both the deterministic and stochastic settings: one exploits convexity of the Lyapunov function, while the other enforces stability through an implicit output layer. We demonstrate the utility of each approach through numerical examples. 
\end{abstract}

\section{Introduction}


Stability is a critical requirement in the design of physical systems. White-box models based on first principles can explicitly account for stability in their design. On the other hand, deep neural networks (DNNs) are flexible function approximators, well suited for modeling complicated dynamics. However, their black-box design makes both physical interpretation and stability analysis challenging.

This paper focuses on the construction of provably stable DNN-based dynamic models. 
These models are amenable to standard deep learning architectures and training practices, while retaining the asymptotic behavior of the underlying dynamics. 
Specifically, we focus on stochastic systems whose state $\bm{x}_t \in \RR^n$ evolves in discrete time as follows:
\begin{equation}
    \bm{x}_{t+1} = f( \bm{x}_t , \bm{\omega}_{t+1} ),
    \quad t\in\Nat_0,
\label{eq:stoch_discrete_time}
\end{equation}
where $\bm{\omega}_t \in \RR^d$ is a stochastic process. 
Although real physical systems typically evolve in continuous time, the periodic sampling of measurement and control signals in digital systems give great practical interest to discrete-time analysis. 
Moreover, noise often plays a prominent role in the underlying dynamics, making it an important feature to consider in the stability analysis. 
Our strategy starts from the philosophy proposed by \citet{manek2019learning}: 
It is easier to construct a stable dynamic model by simultaneously training a suitable Lyapunov function, than it is to separately verify stability for a trained model \emph{a posteriori}. In this work, we propose two methods for guaranteeing stability of deterministic discrete-time dynamic models: we first exploit convexity of a Lyapunov function given by a neural network, and then propose a general approach using an implicit output layer. We then show how to extend our framework from the deterministic case to the stochastic case.

\section{Background}
\label{sec:background}

For brevity, we summarize basic stability results for stochastic systems, as the deterministic analogs can be readily inferred, for example, through discarding the expectation operator in Theorem \ref{thm:exp_stoch_stable}. See, for example, \citep{kalman1959control, khalil2002nonlinear, bof2018lyapunov} for precise statements of the deterministic results. Throughout this paper, $\bm{x} = \bm{0}$ is assumed to be an equilibrium point.
\begin{definition}[Stochastic stability \citep{kozin1969survey, kushner1965stability, kushner2014partial}]
In system \eqref{eq:stoch_discrete_time}, the origin is said to be:
\begin{enumerate}
    \item \emph{Stable in probability} if for each $\epsilon > 0$ we have $$\lim_{\bm{x}_0 \to \bm{0}} \Prob{\sup\limits_{t \in \Nat_0} \norm{\bm{x}_t} > \epsilon} = 0.$$
    \item \emph{Asymptotically stable in probability\/} if it is stable in probability and, for each $\bm{x}_0\in\RR^n$, $$\Prob{\lim_{t \to \infty} \norm{\bm{x}_t} = 0} = 1.$$
    \item \emph{Almost surely (a.s) asymptotically stable} if we have  $$\Prob{\lim\limits_{\bm{x}_0 \to \bm{0}}\sup\limits_{t \in \Nat_0} \norm{\bm{x}_t} = 0} = 1$$ and for any $\bm{x}_0 \in \RR^n$, all sample paths $\bm{x}_t \in \RR^n$ converge to to the origin almost surely.
    \item \emph{$m\textsuperscript{th}$ mean stable\/} if $$\lim_{\bm{x}_0 \to 0} \Exp{\norm{\bm{x}_t}_{m}^m} = 0.$$
\end{enumerate}
\label{def:stochastic_stable}
\end{definition}
Almost sure stability is the direct analog of deterministic stability, as it simply asserts each sample path is stable a.s. The above definitions for asymptotic stability can be strengthened to \emph{exponential stability} (in probability or almost surely) by replacing the convergence of sample trajectories $\bm{x}_{t}$ with convergence of $\eta^t \bm{x}_{t}$, where $\eta > 1$ is a fixed constant. 

Lyapunov stability theory has been adapted to many contexts and is a keystone for analyzing nonlinear systems. Although it was developed to treat deterministic, continuous-time systems, the basic intuition from this setting can be applied to the stochastic and/or discrete-time settings as well. The quantitative difference between the continuous-time and discrete-time cases is the use of an infinitesimal operator, namely, the Lie derivative. Lyapunov stability for discrete-time (stochastic) systems simply checks for a sufficient (expected) decrease in the Lyapunov function between time steps. Throughout this paper, in both deterministic and stochastic settings, we use $V$ to refer to a (candidate) Lyapunov function. 
The standard hypotheses concerning $V$ are as follows:
\begin{enumerate}
    \item $V\colon\RR^n \to \RR$ is continuous
    \item $V(\bm{x}) > 0$ for all $\bm{x} \neq \bm{0}$, and $V(\bm{0}) = 0$
    \item There exists a continuous, strictly increasing function $\varphi\colon[0, \infty) \to [0, \infty)$ such that $V(\bm{x}) \geq \varphi(\norm{\bm{x}})$ for all $\bm{x} \in \RR^n$
    \item $V(\bm{x}) \to \infty$ as $\norm{\bm{x}} \to \infty$
\end{enumerate}
The Lyapunov stability theorems provide sufficient conditions for stability. It is worth noting that in the stochastic case, sufficiency is achieved by showing convergence in expectation of $V$ (rather than in probability) and also by assuming the process is Markovian.

\begin{theorem}[\bf Lyapunov stability \citep{kushner2014partial, qin2019lyapunov}]
Consider the system in Eq. \eqref{eq:stoch_discrete_time}. Let $V\colon\RR^n \to \RR$ be a continuous positive-definite function that satisfies $c_1 \norm{\bm{x}}^{a} \leq V(\bm{x}) \leq c_2 \norm{\bm{x}}^a$ for some $c_1, c_2, a > 0$. Let $\bm{x}_0, \bm{x}_1, \bm{x}_2, \ldots$ be a Markov process generated by Eq. \eqref{eq:stoch_discrete_time}. Assume there is a fixed $0 < \alpha < 1$ such that for all $t \in \Nat_0$
\begin{equation}
    \EE{V(\bm{x}_{t+1})}{\bm{x}_t} - V(\bm{x}_t) \leq -\alpha V(\bm{x}_t)\quad a.s.
    \label{eq:exp_stoch_stable_decrease}
\end{equation}
Then the origin is globally exponentially stable a.s.; if the left hand side of Eq. \eqref{eq:exp_stoch_stable_decrease} is only strictly negative then the origin is globally asymptotically stable in probability.
\label{thm:exp_stoch_stable}
\end{theorem}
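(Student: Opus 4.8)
The plan is to recognize that the one-step expected-decrease condition \eqref{eq:exp_stoch_stable_decrease} turns a suitably rescaled Lyapunov value into a nonnegative supermartingale, and then to extract both convergence and the near-origin stability estimate from the two classical supermartingale theorems of Doob. First I would use the Markov hypothesis to replace the conditioning on $\bm{x}_t$ by conditioning on the natural filtration $\mathcal{F}_t = \sigma(\bm{x}_0,\dots,\bm{x}_t)$, so that \eqref{eq:exp_stoch_stable_decrease} becomes $\EE{V(\bm{x}_{t+1})}{\mathcal{F}_t}\le(1-\alpha)V(\bm{x}_t)$ a.s. Setting $M_t := (1-\alpha)^{-t}V(\bm{x}_t)\ge 0$, this says exactly that $(M_t)_{t\in\Nat_0}$ is a nonnegative supermartingale; taking expectations and iterating gives $\Exp{M_t}\le V(\bm{x}_0)<\infty$ for all $t$, where $\bm{x}_0$ is the (deterministic) starting point.

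Next, Doob's supermartingale convergence theorem yields $M_t\to M_\infty$ a.s. with $M_\infty<\infty$ a.s. Fix any $\eta$ with $1<\eta<(1-\alpha)^{-1/a}$, so $q:=\eta(1-\alpha)^{1/a}\in(0,1)$. From $c_1\norm{\bm{x}_t}^a\le V(\bm{x}_t)=(1-\alpha)^t M_t$ I get $\eta^t\norm{\bm{x}_t}\le q^t\,(M_t/c_1)^{1/a}$, and since $(M_t/c_1)^{1/a}\to(M_\infty/c_1)^{1/a}<\infty$ a.s. while $q^t\to 0$, the product tends to $0$ a.s. This proves $\eta^t\bm{x}_t\to\bm{0}$ a.s. from every initial state, i.e. the a.s. (exponential) convergence of all sample paths. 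For the stability part, Doob's maximal inequality for the nonnegative supermartingale $(M_t)$ gives $\Prob{\sup_{t\in\Nat_0}M_t\ge\lambda}\le V(\bm{x}_0)/\lambda$; using $c_1\norm{\bm{x}_t}^a\le V(\bm{x}_t)\le M_t$ and $\lambda=c_1\epsilon^a$ yields $\Prob{\sup_t\norm{\bm{x}_t}>\epsilon}\le \frac{V(\bm{x}_0)}{c_1\epsilon^a}\le\frac{c_2}{c_1}\left(\frac{\norm{\bm{x}_0}}{\epsilon}\right)^a\to 0$ as $\bm{x}_0\to\bm{0}$, which is stability in probability. To reach the a.s. form $\Prob{\lim_{\bm{x}_0\to\bm{0}}\sup_t\norm{\bm{x}_t}=0}=1$, I would run this bound along initial conditions $\bm{x}_0^{(k)}\to\bm{0}$ chosen so that $\sum_k\norm{\bm{x}_0^{(k)}}^a<\infty$, apply Borel--Cantelli to the events $\{\sup_t\norm{\bm{x}_t^{(k)}}>2^{-j}\}$, and use monotonicity in the initial data. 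Discarding the geometric factors throughout recovers the deterministic statements promised in Section~\ref{sec:background}.

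For the weaker hypothesis, where the left side of \eqref{eq:exp_stoch_stable_decrease} is merely strictly negative, the rescaling is no longer available, but $V(\bm{x}_t)$ itself is now a nonnegative supermartingale, so the maximal-inequality argument above still gives stability in probability, and supermartingale convergence gives $V(\bm{x}_t)\to V_\infty$ a.s. It then remains to show $V_\infty=0$ a.s. (equivalently $\norm{\bm{x}_t}\to 0$). Writing $g(\bm{x}):=V(\bm{x})-\EE{V(\bm{x}_1)}{\bm{x}_0=\bm{x}}$, the hypothesis says $g>0$ off the origin with $g(\bm{0})=0$; summing the one-step decreases gives $\sum_{t=0}^\infty\Exp{g(\bm{x}_t)}\le V(\bm{x}_0)<\infty$, hence $g(\bm{x}_t)\to 0$ in $L^1$. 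On an a.s.-bounded trajectory $g$ is bounded below by a positive constant on each closed annulus away from $\bm{0}$, which forces $\bm{x}_t$ to leave every such annulus eventually, i.e. $V_\infty=0$.

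I expect \emph{this} LaSalle-type step to be the main obstacle. The trajectory bound and the lower bound on $g$ both depend on $\omega$, so a clean argument needs a stopping-time truncation (stop $\bm{x}_t$ when it first exceeds level $N$, carry out the estimate on the event that it never stops, and let $N\to\infty$ using the stability-in-probability bound just established) or a direct appeal to the stochastic LaSalle invariance principle. By contrast, the supermartingale construction, the convergence step, and the maximal-inequality estimate are routine once the Markov property is used to rewrite the conditional expectation.
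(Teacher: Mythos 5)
The paper never proves this theorem: it is quoted from \citet{kushner2014partial} as background, so there is no in-paper argument to compare against. Your supermartingale construction is precisely the standard proof from the cited sources, and its main steps are correct: the Markov property upgrades the conditioning to the natural filtration, $M_t=(1-\alpha)^{-t}V(\bm{x}_t)$ is a nonnegative supermartingale with $\Exp{M_t}\le V(\bm{x}_0)$, Doob convergence plus the lower bound $c_1\norm{\bm{x}}^a\le V(\bm{x})$ gives $\eta^t\bm{x}_t\to\bm{0}$ a.s.\ for any $\eta<(1-\alpha)^{-1/a}$, and the maximal inequality gives $\Prob{\sup_{t}\norm{\bm{x}_t}>\epsilon}\le (c_2/c_1)(\norm{\bm{x}_0}/\epsilon)^a$.

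Two caveats, neither fatal. First, the paper's a.s.-stability definition places $\lim_{\bm{x}_0\to\bm{0}}$ inside a single probability, which tacitly presupposes a coupling of the processes started from different initial points; your Borel--Cantelli patch along a summable sequence of initial conditions is a reasonable reading, but what the maximal inequality actually delivers (and what the paper uses downstream) is stability in probability together with a.s.\ exponential convergence of every sample path. Second, you are right that the only genuinely delicate step is the LaSalle-type argument in the ``strictly negative'' case: to pass from $\sum_t g(\bm{x}_t)<\infty$ a.s.\ to $V_\infty=0$ you need $g(\bm{x})=V(\bm{x})-\Exp{V(f(\bm{x},\bm{\omega}))}$ to be continuous and hence bounded below on compact annuli, which requires continuity of $f$ in $\bm{x}$ and a dominated-convergence hypothesis that the theorem as stated does not supply; in the paper's setting ($f$ and $V$ neural networks) this is harmless, and Kushner handles it with exactly the stopping-time truncation you describe. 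So the proposal is essentially the textbook proof, with its one real gap honestly flagged rather than papered over.
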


{\bf Lyapunov neural networks.} \quad
There have been a couple of proposed neural network architectures for Lyapunov functions. \citet{richards2018lyapunov} propose the structure $V(\bm{x}) = \phi(\bm{x})^T \phi(\bm{x})$, where $\phi$ is a DNN whose weights are arranged such that $V$ is positive-definite. We refer to this structure as a Lyapunov neural network (LNN). \citet{manek2019learning} utilize input-convex neural networks (ICNNs) \citep{amos2017input, chen2018optimal} with minor modifications to define a valid Lyapunov function. A LNN can also be made convex through the same arrangement of weights used in ICNNs. In either case, we add a small term $\epsilon \norm{\bm{x}}^2$ to the Lyapunov function in order to satisfy the lower-bounded property for $V$ described above. In this paper, we simply distinguish between Lyapunov functions based on convexity with the understanding that these architectures satisfy the conditions described above and can therefore be used in simulation examples in section \ref{sec:results}. 

\section{Learning stable deterministic dynamics}
\label{sec:learning_dynamics}

In this section we present a couple of methods for constructing provably stable deterministic discrete-time dynamic models (i.e., $f$ has no $\bm{\omega}_t$-dependence in Eq. \eqref{eq:stoch_discrete_time}). 
We first consider stability under convex Lyapunov functions, then generalize to a non-convex setting. In the next section we extend these results to stochastic models. 
Throughout this paper we use $\hat{f}$ to refer to a nominal DNN model, while $f$ refers to a stable DNN model derived from the nominal model.

\subsection{Convex Lyapunov functions}
\label{subsec:convexcase}

Consider a discrete-time system of the form
\begin{equation}	
\bm{x}_{t+1} = f(\bm{x}_t).
\label{eq:discrete_time}
\end{equation}
Our goal is to construct a DNN representation of $f$ with global stability guarantees about the origin. First, for simplicity, we define $\beta = 1 - \alpha \in (0, 1)$, where $\alpha$ is a fixed parameter as in Theorem \ref{thm:exp_stoch_stable}. Given a Lyapunov function $V$ satisfying the conditions from section \ref{sec:background}, and a nominal model $\hat{f}$, we define the following dynamics:
\begin{align}
\begin{split}
    \bm{x}_{t+1} &= f( \bm{x}_t )\\
    & \equiv
    \begin{cases}
    \hat{f}(\bm{x}_t) &\text{if } V( \hat{f}(\bm{x}_t) ) \leq \beta V( \bm{x}_t )\\
    \hat{f}(\bm{x}_t)\left( \frac{\beta V( \bm{x}_t )}{V( \hat{f}(\bm{x}_t) )} \right) &\text{otherwise}
    \end{cases}\\
     &= \gamma \hat{f}( \bm{x}_t ),
     \qquad \text{where}\ \gamma=\gamma( \bm{x}_t )= \frac{\beta V( \bm{x}_t ) - \texttt{ReLU}\big( \beta V( \bm{x}_t ) - V( \hat{f}( \bm{x}_t ) )}{ V( \hat{f}( \bm{x}_t ))}. 
\end{split}
\label{eq:det_stable_scale}
\end{align}
A geometric interpretation of Eq. \eqref{eq:det_stable_scale} is shown in Figure \ref{fig:rootfind}. It is worth noting that the entire model defined above is used for training, not just $\hat{f}$. Therefore, the underlying objective is to optimize $\hat{f}$ and $V$ subject to the stability condition imposed by Eq. \eqref{eq:det_stable_scale}. The stability proof only requires convexity of $V$ and the deterministic version of Theorem \ref{thm:exp_stoch_stable}.

\begin{proposition}[\bf Stability of deterministic systems]
Let $V$ be a convex candidate Lyapunov function as described in Theorem \ref{thm:exp_stoch_stable}. Then the origin is globally exponentially stable for the dynamics given by Eq. \eqref{eq:det_stable_scale}.
\end{proposition}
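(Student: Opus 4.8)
The strategy is to verify that the dynamics in Eq.~\eqref{eq:det_stable_scale} satisfy the hypotheses of the deterministic version of Theorem~\ref{thm:exp_stoch_stable}, with the \emph{same} $V$ and with $\beta = 1-\alpha$. Since $V$ is already assumed to be a valid (convex) candidate Lyapunov function satisfying the sandwich bound $c_1\norm{\bm{x}}^a \le V(\bm{x}) \le c_2\norm{\bm{x}}^a$, the only thing left to check is the one-step decrease condition $V(f(\bm{x})) \le \beta V(\bm{x})$ for every $\bm{x}$. I would split into the two cases of the piecewise definition. In the first case, $V(\hat f(\bm{x})) \le \beta V(\bm{x})$ holds by assumption, and $f(\bm{x}) = \hat f(\bm{x})$, so there is nothing to prove. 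The work is all in the second case.

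In the second case we have $f(\bm{x}) = \lambda\,\hat f(\bm{x})$ where $\lambda = \beta V(\bm{x})/V(\hat f(\bm{x})) \in (0,1)$ (it is positive because $V>0$ away from the origin, and strictly less than $1$ precisely because we are in the ``otherwise'' branch). The key observation is that $\bm{0}$ is the equilibrium, so $\hat f(\bm{0}) = \bm{0}$ and hence $V(\hat f(\bm{0})) = V(\bm{0}) = 0$; therefore the point $\lambda \hat f(\bm{x})$ lies on the segment joining $\bm{0}$ and $\hat f(\bm{x})$, i.e.\ $\lambda\hat f(\bm{x}) = \lambda\,\hat f(\bm{x}) + (1-\lambda)\cdot\bm{0}$. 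Convexity of $V$ then gives
\[
V\big(f(\bm{x})\big) = V\big(\lambda\,\hat f(\bm{x}) + (1-\lambda)\bm{0}\big) \le \lambda\,V\big(\hat f(\bm{x})\big) + (1-\lambda)V(\bm{0}) = \lambda\,V\big(\hat f(\bm{x})\big) = \beta V(\bm{x}),
\]
using $V(\bm{0})=0$ and the definition of $\lambda$. This is exactly the required contraction, with strict inequality unavailable in general but the non-strict bound $V(f(\bm{x}))\le\beta V(\bm{x}) = (1-\alpha)V(\bm{x})$ sufficing.

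Having established $V(f(\bm{x})) \le \beta V(\bm{x})$ in both cases, I would then unify the two cases by noting that the $\gamma$-formula with the \texttt{ReLU} reproduces exactly $\gamma = \min\{1,\ \beta V(\bm{x})/V(\hat f(\bm{x}))\}$, so the bound holds uniformly; one should also remark that when $\bm{x} = \bm{0}$ we have $f(\bm{0}) = \bm{0}$ so the equilibrium is preserved (and the apparent $0/0$ in $\gamma$ is immaterial since $f$ is defined by the piecewise form). Finally, invoking the deterministic analog of Theorem~\ref{thm:exp_stoch_stable} (Eq.~\eqref{eq:exp_stoch_stable_decrease} with the expectation removed, rearranged to $V(\bm{x}_{t+1}) \le (1-\alpha)V(\bm{x}_t)$) yields global exponential stability of the origin.

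The main obstacle is not any single hard estimate but rather ensuring the convexity argument is deployed correctly: one must use that $\bm{0}$ is an equilibrium point of $\hat f$ to place $f(\bm{x})$ on a chord through the origin, and one must track the edge cases where $V(\hat f(\bm{x})) = 0$ (which forces $\hat f(\bm{x}) = \bm{0}$, hence $f(\bm{x}) = \bm{0}$, handled separately) so that the division defining $\lambda$ is legitimate. A secondary subtlety worth a sentence is that this argument only needs $V$ convex and nonnegative with $V(\bm{0})=0$ --- positive-definiteness and the norm bounds enter solely through the cited theorem, not through the contraction step.
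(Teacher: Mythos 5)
Your proof is correct and follows essentially the same route as the paper's: both reduce to showing $V(\gamma\hat f(\bm{x}))\le\gamma V(\hat f(\bm{x}))\le\beta V(\bm{x})$ via convexity of $V$ together with $V(\bm{0})=0$, then invoke the deterministic analog of Theorem~\ref{thm:exp_stoch_stable}. (One tiny note: the chord argument needs only $V(\bm{0})=0$, not that $\hat f(\bm{0})=\bm{0}$; your extra attention to the $V(\hat f(\bm{x}))=0$ and $\bm{x}=\bm{0}$ edge cases is a welcome refinement over the paper's terser write-up.)
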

\begin{proof}
Fix $\beta \in (0,1)$ 
and take $\gamma=\gamma( \bm{x}_t )$ from Eq. \eqref{eq:det_stable_scale}.
If $V( \hat{f}(\bm{x}_t) ) \leq \beta V( \bm{x}_t )$ we have $\gamma = 1$. Otherwise, $\gamma \in (0, 1)$. In either case, due to convexity of $V$ and the property $V(\bm{0}) = 0$, we have
\begin{align}
    V( \bm{x}_{t+1} ) &= V( \gamma \hat{f}( \bm{x}_t ) )\\
    &\leq \gamma V( \hat{f}( \bm{x}_t ) )\\
    &\leq \beta V( \bm{x}_t )\\
   \iff V( \bm{x}_{t+1} ) - V(\bm{x}_t) &\leq - \alpha V(\bm{x}_t)
\end{align}
where $\alpha = 1- \beta \in (0,1)$.
Therefore, the dynamics given by Eq. \eqref{eq:det_stable_scale} are globally exponentially stable according to the deterministic analog of Theorem \ref{thm:exp_stoch_stable}.
\end{proof}
\begin{figure}[t]
  \centering
  \includegraphics[width = .60\linewidth]{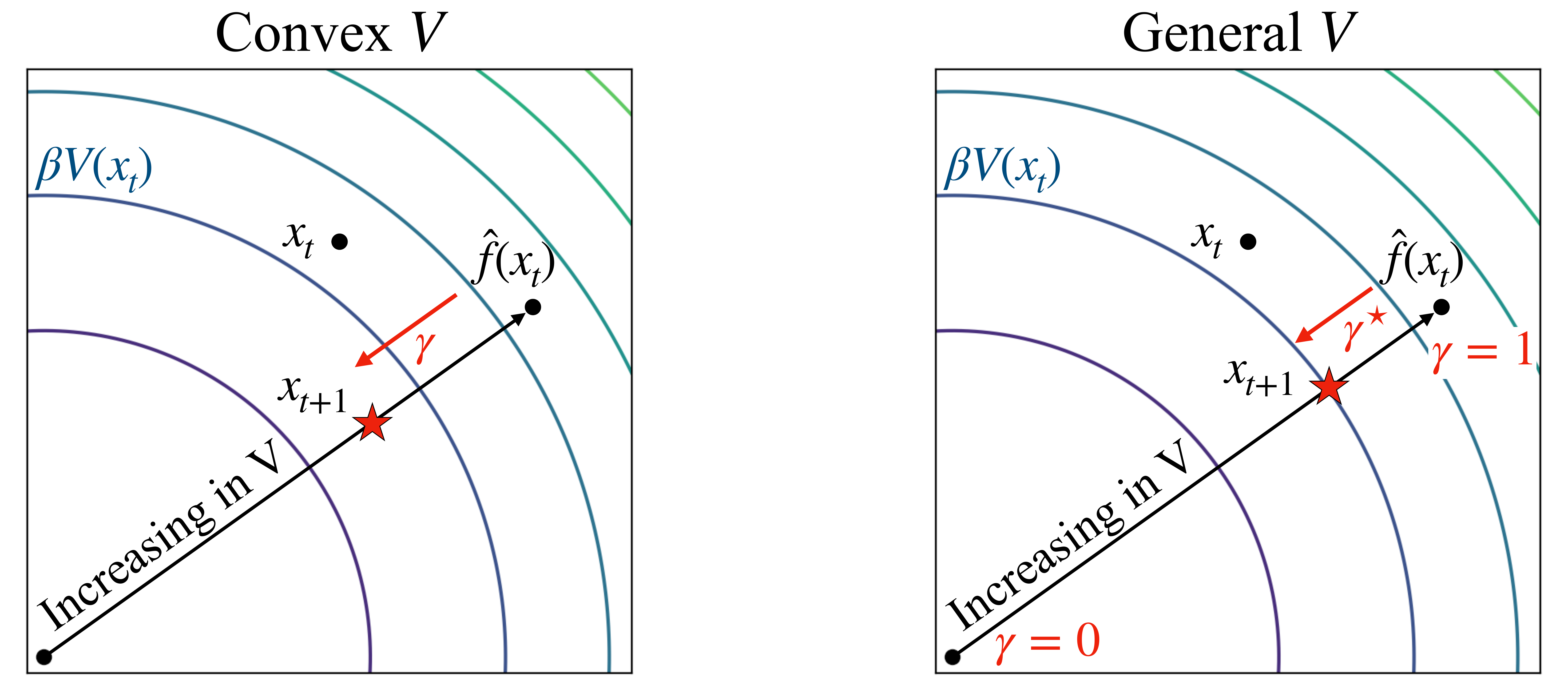}
  \caption{The intuition for our approach is to correct the nominal model $\hat{f}$ through scaling. (Left) $\gamma$ can be written in closed-form when $V$ is convex; (Right) In the general case, $\gamma^{\star}$ is written implicitly as the solution to a root-finding problem.}
  \label{fig:rootfind}
\end{figure}
Although it is not assumed in the definition of a Lyapunov function, convexity is a useful property due to the closed-form expression for $\gamma$ above.
However, the scaling term $\gamma$ may be too restricted, as it relies explicitly on the convexity of $V$. We therefore propose a more general method that implicitly defines such $\gamma$.


\subsection{Non-convex Lyapunov functions}
\label{subsec:nonconvexcase}

In this section, $V$ is not assumed to be convex. For simplicity of exposition, we assume $\bm{x}_t \neq \bm{0}$.

The underlying strategy here is similar to that of the convex case:
If a state transition using the nominal model $\hat{f}$ produces sufficient decrease in $V$, no intervention is required. Otherwise, since we cannot describe a suitable $\gamma$ in closed form, we seek a new state
as follows:
\begin{equation}
    \text{Find}\quad \bm{x}_{t+1}^\star \in \RR^n \quad \text{such that}\quad V(\bm{x}_{t+1}^\star) - \beta V(\bm{x}_t) = 0
    \label{eq:implicit_general}
\end{equation}
Note a solution $\bm{x}_{t+1}^\star$ exists because $V$ is continuous and radially unbounded. Generally, the problem posed by \eqref{eq:implicit_general} is a nonlinear $n$-dimensional root-finding problem whose solution is not unique. However, we can make \eqref{eq:implicit_general} more tractable (both for prediction and training) by reducing it to a $1$-dimensional root-finding problem:
\begin{equation}
    \text{Find}\quad \gamma^\star \in \RR \quad \text{such that}\quad V(\gamma^\star \hat{f}(\bm{x}_t)) - \beta V(\bm{x}_t) = 0
    \label{eq:implicit_1D}
\end{equation}
It is worth noting that problem \eqref{eq:implicit_1D} is a generalization of Eq. \eqref{eq:det_stable_scale} and is therefore state dependent; that is, $\gamma^\star = \gamma^\star (\bm{x}_t)$. Interestingly, we can solve \eqref{eq:implicit_1D} with any root-finding algorithm and it will not affect the training procedure, which we discuss later in this section. We use a robust hybrid between Newton's method and the bisection method. If $V$ is not convex, Newton's method is not guaranteed to solve problems \eqref{eq:implicit_general} or \eqref{eq:implicit_1D} from an arbitrary initial value. However, by observing that $\gamma^\star \in (0, 1)$ whenever $V(\hat{f}(\bm{x}_t)) - \beta V(\bm{x}_t) > 0$, we can simply use the bisection method starting at $\gamma^{(0)} = 1$. Indeed, we have $V(\hat{f}(\bm{x}_t)) - \beta V(\bm{x}_t) > 0$ and $V(\bm{0}) - \beta V(\bm{x}_t) < 0$, so the existence of a solution is guaranteed by the intermediate value theorem. This procedure is illustrated in Figure \ref{fig:rootfind}. Of course, Newton's method is preferred. Therefore, if the Newton iteration takes the iterate $\gamma^{(i)}$ outside $[0,1]$, then we discard this update and instead apply the bisection update, also constricting the interval $[0,1]$ accordingly. Continuing in this fashion, we are guaranteed to find $\gamma^{\star}$ at most as fast as Newton's method. In summary, $\gamma^\star = \gamma^\star (\bm{x}_t)$ from \eqref{eq:implicit_1D} can be used in place of $\gamma$ from section \ref{subsec:convexcase}. Concretely, we write the dynamic model as:
\begin{align}
\begin{split}
    \bm{x}_{t+1} &= f( \bm{x}_t )\\
    & \equiv
    \begin{cases}
    \hat{f}(\bm{x}_t) &\text{if } V( \hat{f}(\bm{x}_t) ) \leq \beta V( \bm{x}_t )\\
    \gamma^{\star} \hat{f}(\bm{x}_t) &\text{otherwise}
    \end{cases}
\end{split}
\label{eq:det_stable_rootfind}
\end{align}
In the following theorem, we address the stability and continuity of the model given by Eq. \eqref{eq:det_stable_rootfind}. Simply put, the implicit model \eqref{eq:det_stable_rootfind} inherits continuity through the nominal model $\hat{f}$ and Lyapunov function $V$ via the implicit function theorem \citep{rudin1964principles}. That is, the parameter $\gamma^\star$ varies continuously, even in regions of the state space in which both cases of the piece-wise rule in Eq. \eqref{eq:det_stable_rootfind} are active.
While the proof is fairly straightforward, it requires some care, so we provide the details in Appendix \ref{app:continuity}. Finally, we note that in addition to the assumptions about $V$ from section \ref{sec:background}, we assume $V$ is monotonically increasing in all directions from the origin and is continuously differentiable. These conditions are readily satisfied with the architectures for $V$ described in section \ref{sec:background}.
 
\begin{theorem}[\bf Stability and continuity of implicit dynamics]
Let $\hat{f}\colon\RR^n \to \RR^n$ be a nominal dynamic model and $V\colon\RR^n \to \RR$ be a candidate Lyapunov function. Assume $\hat{f}$ and $V$ are continuously differentiable. Further, assume for each fixed $\bm{x} \in \RR^n$ that the function $h \colon \RR \to \RR$ given by $h(\gamma) = V(\gamma \hat{f}(\bm{x}))$ satisfies $h' > 0$. Then the dynamics defined by Eq. \eqref{eq:det_stable_rootfind} are globally exponentially stable. Moreover, the model $f$ is locally Lipschitz continuous.
 \label{thm:rootfind_cont}
 \end{theorem}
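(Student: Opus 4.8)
The proof splits into the stability assertion and the regularity assertion, and I would handle them in that order.

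\emph{Stability.} This mirrors the convex case, with the closed-form $\gamma$ replaced by the implicitly defined $\gamma^\star$. Fix $\bm{x}_t$ and write $h_{\bm{x}_t}(\gamma) = V(\gamma\hat{f}(\bm{x}_t))$. The hypothesis $h' > 0$ makes $h_{\bm{x}_t}$ strictly increasing, with $h_{\bm{x}_t}(0) = V(\bm{0}) = 0$ and, since $h'>0$ forces $\hat{f}(\bm{x}_t)\neq\bm{0}$, $\norm{\gamma\hat{f}(\bm{x}_t)}\to\infty$ and hence $h_{\bm{x}_t}(\gamma)\to\infty$ as $\gamma\to\infty$ by radial unboundedness of $V$. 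Thus for every $\bm{x}_t$ there is a unique $\gamma^\star(\bm{x}_t)\geq 0$ solving \eqref{eq:implicit_1D}. Inspecting the two branches of \eqref{eq:det_stable_rootfind} then gives $V(\bm{x}_{t+1}) = \min\{V(\hat{f}(\bm{x}_t)),\,\beta V(\bm{x}_t)\} \leq \beta V(\bm{x}_t)$ in all cases, i.e. $V(\bm{x}_{t+1}) - V(\bm{x}_t) \leq -\alpha V(\bm{x}_t)$ with $\alpha = 1-\beta\in(0,1)$. Since $f(\bm{0}) = \bm{0}$, the deterministic analog of Theorem \ref{thm:exp_stoch_stable} yields global exponential stability.

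\emph{Regularity.} The key step is to rewrite the piecewise model \eqref{eq:det_stable_rootfind} in the single form $f(\bm{x}) = \min\{1,\gamma^\star(\bm{x})\}\,\hat{f}(\bm{x})$, with $\gamma^\star(\bm{x})$ the unique root above. This identity holds because strict monotonicity of $h_{\bm{x}}$ turns $V(\hat{f}(\bm{x})) \leq \beta V(\bm{x})$ into $\gamma^\star(\bm{x}) \geq 1$ and the strict inequality into $\gamma^\star(\bm{x}) \in (0,1)$: on the first branch $\min\{1,\gamma^\star\} = 1$ and $f = \hat{f}$, on the second $\min\{1,\gamma^\star\} = \gamma^\star$ and $f = \gamma^\star\hat{f}$, matching \eqref{eq:det_stable_rootfind}. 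Because $t\mapsto\min\{1,t\}$ is globally $1$-Lipschitz and $\hat{f}$ is $C^1$, hence locally Lipschitz and locally bounded, it suffices to show $\bm{x}\mapsto\gamma^\star(\bm{x})$ is locally Lipschitz on $\RR^n$; a product of locally Lipschitz, locally bounded maps is locally Lipschitz. For this I would apply the implicit function theorem to $F(\bm{x},\gamma) = V(\gamma\hat{f}(\bm{x})) - \beta V(\bm{x})$: $F$ is $C^1$ since $V,\hat{f}$ are, and $\partial F/\partial\gamma(\bm{x},\gamma) = \langle\nabla V(\gamma\hat{f}(\bm{x})),\hat{f}(\bm{x})\rangle = h_{\bm{x}}'(\gamma) > 0$ everywhere by hypothesis. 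At any $\bm{x}_0$, $(\bm{x}_0,\gamma^\star(\bm{x}_0))$ is a nondegenerate zero of $F$, so the implicit function theorem gives a $C^1$ solution branch near $\bm{x}_0$; uniqueness of the root for each fixed $\bm{x}$ identifies this branch with $\gamma^\star$, so $\gamma^\star$ is $C^1$ on all of $\RR^n$. With the reformulation of $f$, this gives local Lipschitz continuity on $\RR^n$, including at the origin where $f(\bm{0}) = \bm{0}$.

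\emph{Main obstacle.} The delicate set is the switching boundary $\{\bm{x} : V(\hat{f}(\bm{x})) = \beta V(\bm{x})\}$, where both branches of \eqref{eq:det_stable_rootfind} are simultaneously active; directly comparing the branch formulas there yields continuity but no obvious Lipschitz modulus. The reformulation $f = \min\{1,\gamma^\star(\cdot)\}\hat{f}$ is exactly the device that removes the difficulty, since $\gamma^\star \equiv 1$ on that set: the non-smooth ``kink'' is absorbed entirely into the benign outer map $t\mapsto\min\{1,t\}$, while the inner object $\gamma^\star$ remains $C^1$ throughout because the nondegeneracy $\partial_\gamma F = h_{\bm{x}}' > 0$ never fails. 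The one further point needing care is that the a priori local branch from the implicit function theorem is globally the function $\gamma^\star$; this follows from strict monotonicity of $F(\bm{x},\cdot)$.
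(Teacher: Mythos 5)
Your proof is correct and follows essentially the same route as the paper's: unique root of $\gamma\mapsto V(\gamma\hat f(\bm{x}))-\beta V(\bm{x})$ from strict monotonicity plus radial unboundedness, the implicit function theorem applied with $\partial_\gamma F = h'>0$ to get a $C^1$ (hence locally Lipschitz) root map, and absorption of the switching boundary into a clipping of $\gamma^\star$ at $1$ (the paper writes this as $f(\bm{x})=\texttt{sat}(\gamma^\star)\hat f(\bm{x})$, identical to your $\min\{1,\gamma^\star(\bm{x})\}\hat f(\bm{x})$ since $\gamma^\star>0$). Your write-up of the regularity step, composing the globally $1$-Lipschitz outer map with the $C^1$ inner maps, is in fact a cleaner rendering of the paper's neighborhood argument.
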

 \begin{proof}
 Please see Appendix \ref{app:continuity} for a detailed proof.
\end{proof}

 {\bf Training implicit dynamic models.}\quad
 Our stable implicit model falls into the class of implicit layers in deep learning \cite{zhang2020implicitly, bai2019deep, el2019implicit, agrawal2019differentiable, amos2017optnet}. This means that part of a DNN is not defined with the standard feed-forward structure, but rather an implicit statement describing the next layer. As such, the implicit function theorem can be used to train the model \eqref{eq:det_stable_rootfind}. In particular, problem \eqref{eq:implicit_1D} seeks a zero of the function $g(\gamma) = V(\gamma \hat{f}(\bm{x})) - \beta V(\bm{x})$, which has an invertible (nonzero) derivative at $\gamma^\star$. The backpropagation equations then follow by the chain rule.
 
We can also capitalize on the recent insights developed around deep equilibrium models (DEQs) \cite{bai2019deep}. The basic idea behind training a DEQ is to backpropagate through a fixed-point equation rather than through the entire sequence of steps leading to the fixed-point. Concretely, if $F(\gamma) = \gamma - g(\gamma)/g'(\gamma)$ is the standard scalar Newton iteration, then \eqref{eq:implicit_1D} is equivalently a fixed-point problem in $F$. We can therefore backpropagate through the fixed point given by the Newton iteration. Notably, this approach can still incorporate the bisection method, as backpropagation relies only on the end result $\gamma^\star$. This approach simplifies the implementation of training the implicit dynamic model and is our preferred method. In particular, since $F$ is in terms of both $\hat{f}$ and $V$, automatic differentiation tools enable streamlined parameter updates through use of $F$. 
 For completeness, we give the corresponding backpropagation equations for these approaches in Appendix \ref{app:implicit_train}.

\section{Stochastic systems}

We now extend our results from the deterministic setting to the stochastic setting. We start with the main result, then discuss its practical implementation.

{\bf Mixture Density Networks.}\quad
Mixture density networks (MDNs) provide a simple and general method for modeling conditional densities \cite{bishop1994mixture, ha2018world, graves2013generating, zen2014deep}. Concretely, 
we consider the form
\begin{equation}
    \pp{\bm{x}_{t+1}}{\bm{x}_t} = \sum_{i = 1}^{k} \pi_{i}(\bm{x}_t) \phi_{i} (\bm{x}_{t+1} \vert \bm{x}_{t} ),
    \label{eq:mdn}
\end{equation}
where each $\phi_i$ is a kernel function (usually Gaussian) and the mixing coefficients $\pi_i$  are nonnegative and sum to $1$. 
The parameters for each kernel function and the respective mixing coefficients are the outputs of a DNN. Therefore, MDNs are appealing for our purposes of modeling stochastic dynamics because they are compatible with any DNN and a closed-form of the underlying mean and covariance is always available.  Moreover, MDNs can be trained by minimizing the negative log-likelihood via standard backpropagation through the model. These are useful properties for adapting our methods from the deterministic case. Other stochastic models such as stochastic feed-forward neural networks \citep{tang2013learning} or Bayes by backprop \citep{blundell2015weight} have a higher capacity for complex densities but lack some of these attributes of MDNs.

 \begin{figure}[tbh]
  \centering
  \includegraphics[width = .50\linewidth]{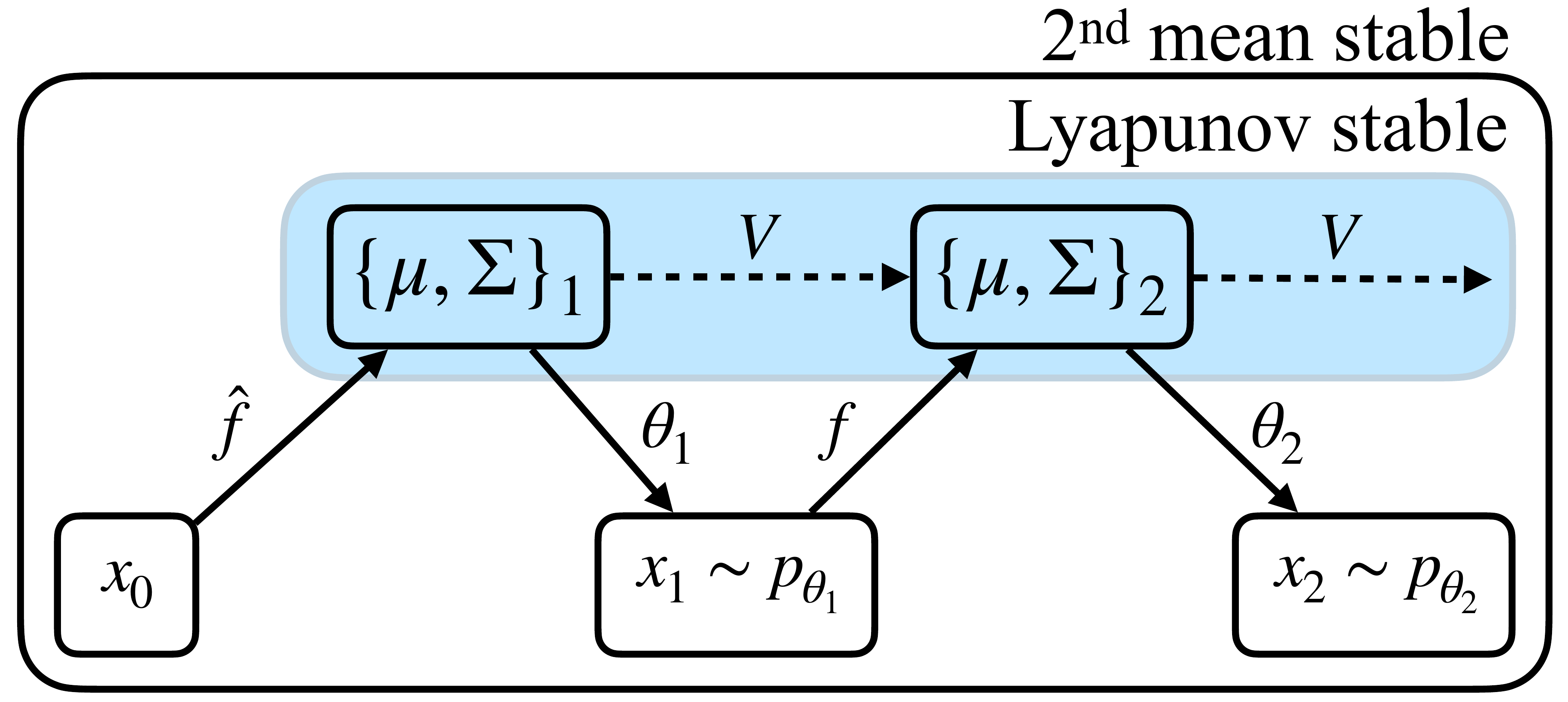}
  \caption{We impose stability on the conditional mean and variance dynamics produced by a MDP by means that ensure stability of the stochastic process $\bm{x}_t$. The dashed lines indicate a `target' produced by $V$ based on the previous mean/covariance $\{ \bm{\mu}, \Sigma \}_t$; here $\theta_t$ are the mixture parameters. The shaded region disentangles the stability of the mean/covariance dynamics from that of the state dynamics.}
  \label{fig:stoch_stable}
\end{figure}

In the following theorem, \emph{conditional mean dynamics} refers to the sequence of means $\bm{\mu}_1, \bm{\mu}_2, \ldots$ of Eq. \eqref{eq:mdn}, namely:
\begin{equation}
    \bm{\mu}_{t+1} = \sum_{i=1}^{k} \pi_{i}(\bm{x}_t) \hat{\bm{\mu}}_{i}(\bm{x}_t),
\end{equation}
where $\hat{\bm{\mu}}_{i}\in\RR^n$ is the conditional mean for mixture $i$. Similarly, we refer to the conditional covariance matrices of Eq. \eqref{eq:mdn} as $\Sigma_{t}$ for each $t$. Note that these are stochastic processes as they depend on states $\bm{x}_t$. The following result shows that stochastic stability can be characterized in terms of conditional mean dynamics and the conditional covariance matrices of a MDN. 

\begin{theorem}[\bf Stable stochastic dynamic models]
Let $f\colon\RR^n \to \RR^\ell$ be a MDN model ($\ell$ is proportional to $n$ and the number of mixtures) and $V\colon\RR^n \to \RR$ be a candidate Lyapunov function satisfying the conditions of Theorem \ref{thm:rootfind_cont}. Assume $c_1 \norm{\bm{x}}^2 \leq V(\bm{x})$ for some $c_1 > 0$. Let $\bm{\mu}_{t+1}$ denote the conditional mean dynamics and $\Sigma_{t+1}$ denote the conditional covariances. Assume the conditional mean dynamics are stable in probability according to $V$. If the maximum eigenvalue of $\Sigma_{t+1}$ is proportional to $V(\bm{\mu}_{t+1})$ for all $t$, then the stochastic system generated by $\hat{f}$ is $2$\textsuperscript{nd} mean stable. 
\label{thm:stable_stoch_model}
\end{theorem}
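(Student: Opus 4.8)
The plan is to reduce the $2$nd-mean stability of $\bm{x}_t$ to the stability-in-probability of the conditional mean dynamics $\bm{\mu}_t$ together with the eigenvalue control on $\Sigma_t$. The key identity is the bias–variance decomposition of the second moment: conditioned on $\bm{x}_t$, we have $\EE{\norm{\bm{x}_{t+1}}^2}{\bm{x}_t} = \norm{\bm{\mu}_{t+1}}^2 + \operatorname{tr}(\Sigma_{t+1})$, since $\bm{\mu}_{t+1}$ and $\Sigma_{t+1}$ are (by definition) the conditional mean and covariance of the MDN kernel mixture given $\bm{x}_t$. First I would write this out explicitly from Eq. \eqref{eq:mdn}, using that the conditional covariance of a mixture is the within-component covariance plus the between-component spread of the $\hat{\bm{\mu}}_i$; both pieces are absorbed into $\Sigma_{t+1}$, so the identity holds cleanly. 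Then $\operatorname{tr}(\Sigma_{t+1}) \le n\,\lambda_{\max}(\Sigma_{t+1})$, and by hypothesis $\lambda_{\max}(\Sigma_{t+1}) = \kappa\, V(\bm{\mu}_{t+1})$ for some constant $\kappa > 0$ (or $\le \kappa V(\bm{\mu}_{t+1})$; I would use the inequality form, which is all that's needed). Hence
\begin{equation}
    \EE{\norm{\bm{x}_{t+1}}^2}{\bm{x}_t} \;\le\; \norm{\bm{\mu}_{t+1}}^2 + n\kappa\, V(\bm{\mu}_{t+1}).
    \label{eq:plan_decomp}
\end{equation}

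Next I would push this through $V$. The assumption is that the conditional mean dynamics are stable in probability \emph{according to $V$}, which I read as: $V$ is a stochastic Lyapunov function for the process $\bm{\mu}_t$, so that $\EE{V(\bm{\mu}_{t+1})}{\bm{x}_t} \le \beta\, V(\bm{x}_t)$ for the same $\beta = 1-\alpha \in (0,1)$ used throughout (this is exactly the decrease condition built into the constructions of Sections \ref{subsec:convexcase}–\ref{subsec:nonconvexcase}, now applied to the mean map). Combining with the two-sided bound $c_1\norm{\bm{x}}^2 \le V(\bm{x}) \le c_2 \norm{\bm{x}}^2$ — the upper bound coming from the LNN/ICNN architecture as noted in Section \ref{sec:background}, the lower bound from the hypothesis $c_1\norm{\bm{x}}^2 \le V(\bm{x})$ — I get $\norm{\bm{\mu}_{t+1}}^2 \le V(\bm{\mu}_{t+1})/c_1$, so \eqref{eq:plan_decomp} yields $\EE{\norm{\bm{x}_{t+1}}^2}{\bm{x}_t} \le (c_1^{-1} + n\kappa)\,\EE{V(\bm{\mu}_{t+1})}{\bm{x}_t} \le (c_1^{-1}+n\kappa)\,\beta\, V(\bm{x}_t) \le (c_1^{-1}+n\kappa)\,\beta\, c_2 \norm{\bm{x}_t}^2$. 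Taking total expectations and iterating, $\Exp{\norm{\bm{x}_{t}}^2} \le \big((c_1^{-1}+n\kappa)\beta c_2\big)^t \norm{\bm{x}_0}^2$; since the per-step factor is a fixed constant, letting $\bm{x}_0 \to \bm{0}$ forces $\Exp{\norm{\bm{x}_t}_2^2} \to 0$, which is $2$nd-mean stability per Definition \ref{def:stochastic_stable}(4). (If one additionally wants decay in $t$, one would want the constant $<1$, but the definition as stated only asks for the $\bm{x}_0\to\bm{0}$ limit, so I would be careful to claim only that.)

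The main obstacle is pinning down precisely what ``stable in probability according to $V$'' and ``proportional to $V(\bm{\mu}_{t+1})$'' are supposed to mean as quantitative hypotheses — the statement is slightly informal, and the proof hinges on converting ``stable in probability of the mean dynamics'' into the one-step supermartingale-type inequality $\EE{V(\bm{\mu}_{t+1})}{\bm{x}_t}\le \beta V(\bm{x}_t)$. I would resolve this by invoking Theorem \ref{thm:exp_stoch_stable} (its deterministic-mean analog) in reverse: the constructions of Section \ref{sec:learning_dynamics}, applied to the conditional-mean map $\bm{x}_t \mapsto \bm{\mu}_{t+1}$ via the scaling $\gamma$ or $\gamma^\star$, are exactly what \emph{guarantee} such an inequality, so in the theorem's hypotheses ``stable according to $V$'' should be taken to mean this decrease condition holds. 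A secondary subtlety is the Markov requirement in Theorem \ref{thm:exp_stoch_stable}: I should note that Eq. \eqref{eq:mdn} makes $\bm{x}_t$ a Markov chain (the density of $\bm{x}_{t+1}$ depends only on $\bm{x}_t$), so the theorem applies. Everything else — the bias–variance identity, the trace/eigenvalue bound, the sandwiching by $c_1,c_2$, and the iteration — is routine and I would relegate the arithmetic to a line or two.
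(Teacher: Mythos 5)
Your first half---the bias--variance decomposition $\EE{\norm{\bm{x}_{t+1}}^2}{\bm{x}_t} = \norm{\bm{\mu}_{t+1}}^2 + \operatorname{tr}(\Sigma_{t+1})$, the bound $\operatorname{tr}(\Sigma_{t+1}) \le n\,\lambda_{\max}(\Sigma_{t+1}) \le n\kappa\, V(\bm{\mu}_{t+1})$, and $\norm{\bm{\mu}_{t+1}}^2 \le V(\bm{\mu}_{t+1})/c_1$---is exactly the paper's opening move, arriving at the same $\mathcal{O}\big(V(\bm{\mu}_{t+1})\big)$ bound on the conditional second moment. You diverge in how you close the argument. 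The paper reads ``stable in probability according to $V$'' as the almost-sure \emph{pathwise} decrease $V(\bm{\mu}_{t+1}) \le V(\bm{\mu}_t)$ (this is what the $\gamma/\gamma^{\star}$ constructions enforce on the mean map, with target $\beta V(\bm{\mu}_t)$), chains the bound back to $t=1$ via this monotonicity, and uses only continuity of $\hat{f}$ and $V$ at the origin to get $c\,V(\bm{\mu}_1)<\epsilon$ whenever $\norm{\bm{x}_0}<\delta$; this yields a bound on $\Exp{\norm{\bm{x}_t}^2}$ that is uniform in $t$. You instead read the hypothesis as the conditional expected decrease $\EE{V(\bm{\mu}_{t+1})}{\bm{x}_t}\le\beta V(\bm{x}_t)$ and iterate through the state process, which forces you to import a quadratic upper bound $V(\bm{x})\le c_2\norm{\bm{x}}^2$ that is not among the theorem's hypotheses (only the lower bound is assumed), and it leaves you with a per-step factor $(c_1^{-1}+n\kappa)\beta c_2$ that may exceed $1$, so your estimate $\big((c_1^{-1}+n\kappa)\beta c_2\big)^t\norm{\bm{x}_0}^2$ gives convergence only for each fixed $t$, not uniformly over $t$---a limitation you flag yourself. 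Both readings are defensible given the informality of the hypotheses, but the paper's pathwise-monotonicity route is the intended one: it avoids the extra upper bound on $V$ and delivers the stronger, $t$-uniform conclusion (the analogue of the $\sup_t$ in the definition of stability in probability), which is what one really wants from a stability statement.
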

\begin{proof}
Note that for all $t \in \Nat_0$ $$\EE{\norm{\bm{x}_{t+1}}^2}{\bm{x}_t} = \text{Trace}[\Sigma_{t+1}] + \norm{\bm{\mu}_{t+1}}^2 \leq \text{Trace}[\Sigma_{t+1}] + \frac{1}{c_1} V(\bm{\mu}_{t+1}) = \mathcal{O}\big( V(\bm{\mu}_{t+1})\big)$$ because $V$ is lower bounded by $c_1 \norm{\bm{x}}^2$ 
and because the trace of $\Sigma_{t+1}$ is the sum of its eigenvalues.

Now, fix $\epsilon > 0$. Let $c$ be a constant that achieves the above upper bound. By continuity of $\hat{f}$ and $V$, let $\delta > 0$ be such that we have $c V( \bm{\mu}_1) < \epsilon$ whenever $\norm{\bm{x}_0} < \delta$.

We then have
\begin{align}
    \Exp{\norm{\bm{x}_1}^2} &= \Exp{\EE{\norm{\bm{x}_1}^2}{\bm{x}_0}}\\
    &\leq c \Exp{ V(\bm{\mu}_1)} < \epsilon\label{eq:decreasemu}
\end{align}
Since we have that $V(\bm{\mu}_{t+1}) \leq V(\bm{\mu}_t)$ a.s. for all $t \in \Nat_0$ it follows that Eq. \eqref{eq:decreasemu} holds for all $t \in \Nat_0$ and all trajectories such that $\norm{\bm{x}_0} < \delta$. Therefore, the stochastic system generated by $\hat{f}$ is $2$\textsuperscript{nd} mean stable.
\end{proof}
\begin{remark}
The above assumptions can be relaxed to only require continuity of $f$ and $V$, and convexity of $V$ if the techniques from section \ref{subsec:convexcase} are employed on the conditional means instead of the implicit dynamics approach.
\end{remark}

{\bf Stable mean dynamics.}\quad The intuition behind Theorem \ref{thm:stable_stoch_model} is to differentiate between the trajectory of the conditional means $\bm{\mu}_{t+1}$ and that of the states $\bm{x}_{t}$. In particular, each sample path of the conditional means can be constrained to decrease in $V$ using the tools from section \ref{sec:learning_dynamics}. This is because $\gamma$ (from section \ref{subsec:convexcase}) and $\gamma^{\star}$ (from section \ref{subsec:nonconvexcase}) are explicitly designed to bring new `states' $\bm{\mu}_{t+1}$ to a desired level set, such as $V_{\text{target}}=\beta V(\bm{\mu}_t)$. In this way, we impose $V(\bm{\mu}_{t+1}) \leq \beta V(\bm{\mu}_t)$ a.s. for all $t$, and consequently, $\Prob{\lim_{t \to \infty} \norm{\bm{\mu}_t} = 0} = 1$ due to Theorem \ref{thm:exp_stoch_stable}, as each sample path of $\bm{\mu}_{t+1}$ produces sufficient stepwise decreases in $V$. It is worth noting that the expectation operator in Theorem \ref{thm:exp_stoch_stable} is intractable over a general $V$, and therefore motivates our approach of unifying Lyapunov stability of the conditional means with the structure of a MDN to ultimately arrive at $2$\textsuperscript{nd} mean stability. A schematic of this idea is shown in Figure \ref{fig:stoch_stable}.

Stability of the means does not necessarily imply stability of the stochastic system, which is why we also require the covariance goes to zero. Though other conditions are possible, Theorem \ref{thm:stable_stoch_model} prescribes the simple condition that the eigenvalues of $\Sigma_{t+1}$ must vanish with $\bm{\mu}_{t+1}$. This can be achieved by restricting the covariances of each mixture to be diagonal, then bounding them and scaling, for example, by $\norm{\bm{\mu}_{t+1}}$ or $V(\bm{\mu}_{t+1})$. Requiring the covariances to be diagonal in a mixture model is not a significant drawback, as more mixtures may be used \citep{bishop1994mixture}.



\section{Related work}

Our work is most similar in spirit to that of \citet{manek2019learning}. However, their proposed approach is for deterministic, continuous-time systems, whereas this paper is concerned with learning from noisy discrete measurements $\bm{x}_t, \bm{x}_{t+1}, \ldots$ (rather than observations of the functions $\bm{x}(\cdot)$ and $\dot{\bm{x}}(\cdot)$). Discrete-time systems with stochastic elements require completely different analysis. Lyapunov stability theory has been deployed in several other recent machine learning and reinforcement learning works. \citet{richards2018lyapunov} introduce a general neural network structure for representing Lyapunov functions. The approach is used to estimate the largest region of attraction for a fixed deterministic, discrete-time system. \citet{umlauft2017learning} consider the stability of nonlinear stochastic models under certain state transition distributions. However, their approach is constrained to provably stable stochastic dynamics under a quadratic Lyapunov function. \citet{khansari2011learning} consider Gaussian mixture models for learning continuous-time dynamical systems but only enforce stability of the means. \citet{wang2006gaussian} develop dynamical models in which the latent dynamics and observations follow Gaussian Processes; stability analysis is later given by \citet{beckers2016equilibrium, beckers2016stability}. In reinforcement learning, \cite{berkenkamp2017safe, chow2018lyapunov, chang2019neural} utilize Lyapunov stability to perform safe policy updates within an estimated region of attraction. 

 Stability analysis has also been incorporated into the design, training, and interpretation of neural networks. For example, \citet{haber2017stable, chang2018reversible} view a residual network as a discretization of an ordinary differential equation, leading to improved sample efficiency in image classification tasks due to the well-posedness and stability of the underlying dynamics. In the same vein, \citet{chen2018neural, chen2019neural} directly parameterize the time derivative of the hidden state dynamics and utilize a numerical ODE solver for predictions, then extend these ideas to stochastic differential equations. Recurrent models also describe dynamical systems and therefore have been studied through this lens, for example, by \citet{miller2018stable, bonassi2019lstm}. By extension, the optimality of identified models through (stochastic) gradient descent on linear and nonlinear dynamics has been studied \cite{hardt2018gradient, oymak2018stochastic}.
 \begin{figure}[tbh]
  \centering
  \includegraphics[width = .90\linewidth]{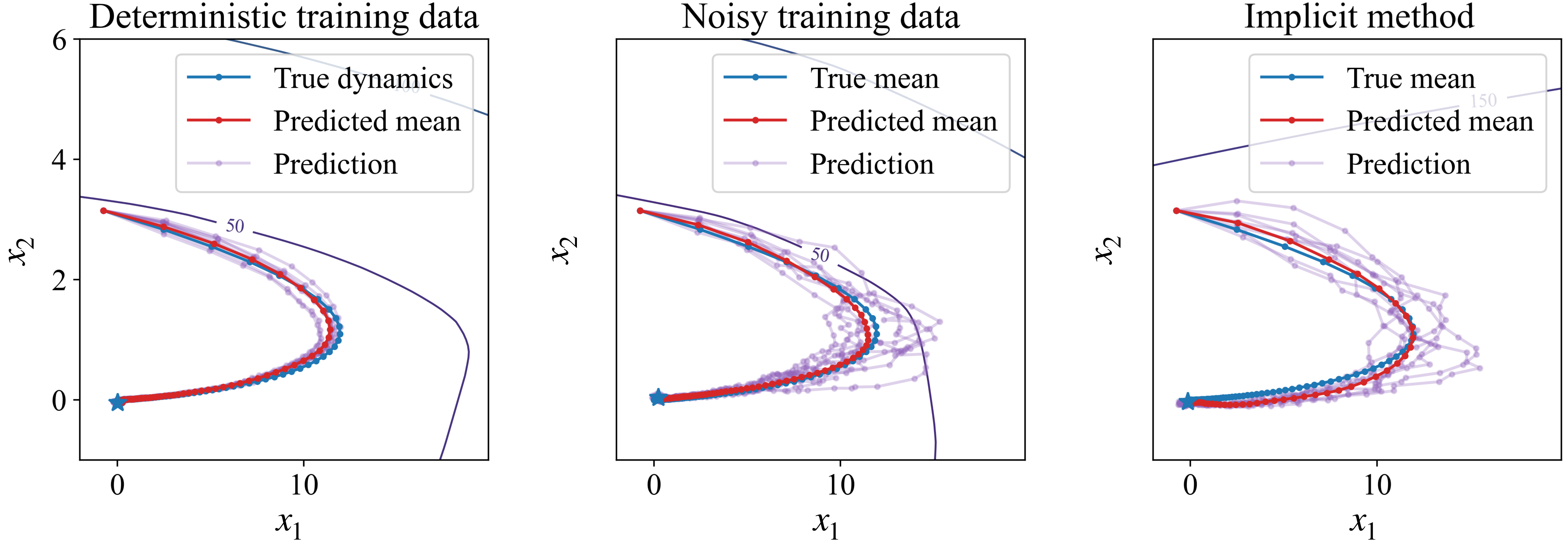}
  \caption{This example illustrates all of the methods developed in this paper. (Left) Sample trajectories after training a stable MDN with data from a deterministic system; (Middle) Sample trajectories after training with noisy data; (Right) Same as the middle, but with the implicit stability method to define the dynamics.}
  \label{fig:ex1}
\end{figure}

\section{Experiments}
\label{sec:results}

The code for our methods is available here: \url{https://github.com/NPLawrence/stochastic_dynamics}.

We first show a toy example with linear systems to illustrate all the methods presented in this paper. We then give numerical results for nonlinear systems, both deterministic and stochastic. We use a fully connected feedforward neural network for both $\hat{f}$ and $V$. Further details about the experiments and models can be found in Appendix \ref{app:exp_details}. We also give an example dealing with a chaotic system in Appendix \ref{app:chaotic}. Although the examples here deal with low-dimensional state spaces for convenient visualizations, we note that our method is not restricted to this setting, as the dynamic model is based on DNNs and thus can take any state dimension. 

{\bf Example 1 (A linear system).} 
It is well-known that a quadratic Lyapunov function can be obtained for a stable deterministic linear system by solving the Lyapunov equation (Algebraic Riccati equation for dynamics without inputs). 
A similar statement holds for stochastic linear systems of the form
\begin{equation}
    \bm{x}_{t+1} = A \bm{x}_t + B \bm{x}_t \omega_t, \quad \text{where } \omega_t \sim \mathcal{N}(0, 1).
    \label{eq:stoch_linear}
\end{equation}
In particular, if for any positive definite $Q$ we can solve $A^T P A + B^T P B - P + Q = 0$ for some positive definite $P$, then $V(\bm{x}) = \bm{x}^T P \bm{x}$ can be used to certify stochastic stability of the system \eqref{eq:stoch_linear}. It is then clear that for a linear system there are many valid Lyapunov functions. This justifies their use and design in constructing stable DNN models both in deterministic and stochastic settings.

Results are shown in Figure \ref{fig:ex1} and correspond to the matrix
{
\[
A = 
\begin{bmatrix}
0.90 & 1\\
0 & 0.90
\end{bmatrix}
\]
}
in Eq. \eqref{eq:stoch_linear}. In our first experiment, we use training data from the system \eqref{eq:stoch_linear} in which there is no noise. As such, the MDN gives very small variance in its predictions. The predicted mean refers to the dynamics defined by feeding the means through the MDN as `states' (i.e. no sampling). The next two plots show predictions corresponding to the system \eqref{eq:stoch_linear} with $B = 0.1$, where the last plot uses implicit dynamics.

{\bf Example 2 (Non-convex Lyapunov neural network).}
Consider the system
\begin{align}
\begin{split}
    \dot x &= y\\
    \dot y &= - y - \sin(x) - 2 \texttt{sat}(x + y),
\end{split}
\label{eq:ex2}
\end{align}
where $\texttt{sat}(u)=u$ for $-1<u<1$
and $\texttt{sat}(u)=u/|u|$ otherwise.
It can be shown that the origin is globally asymptotically stable in the system \eqref{eq:ex2}, in part, by considering the nonquadratic Lyapunov function $V(x) = x^2 + 0.5y^2 + 1 - \cos(x)$ \cite{khalil2002nonlinear}. The trajectories in Figure \ref{fig:ex2} were computed using our methods with an ICNN-based Lyapunov function, a LNN, and a convex LNN that follows the ICNN construction. For the LNN case, we train the model using the implicit method, while the other two use the convexity-based method. It is worth noting that the system \eqref{eq:ex2} is not the `true' system in our experiment, rather the models are trained on a coarse discretization (time-step $h = 0.1$) of Eq. \eqref{eq:ex2} via a fourth order Runge-Kutta method. Figure \ref{fig:ex2} shows sample trajectories corresponding to initial conditions not seen during training. Notably, both convexity-based simulations deviate from the true trajectory, whereas the implicit method is able to more accurately navigate the regions in the $x_1$-$x_2$ plane with fluctuations before descending toward the origin.
\begin{figure}[tbh]
  \centering
  \includegraphics[width = .90\linewidth]{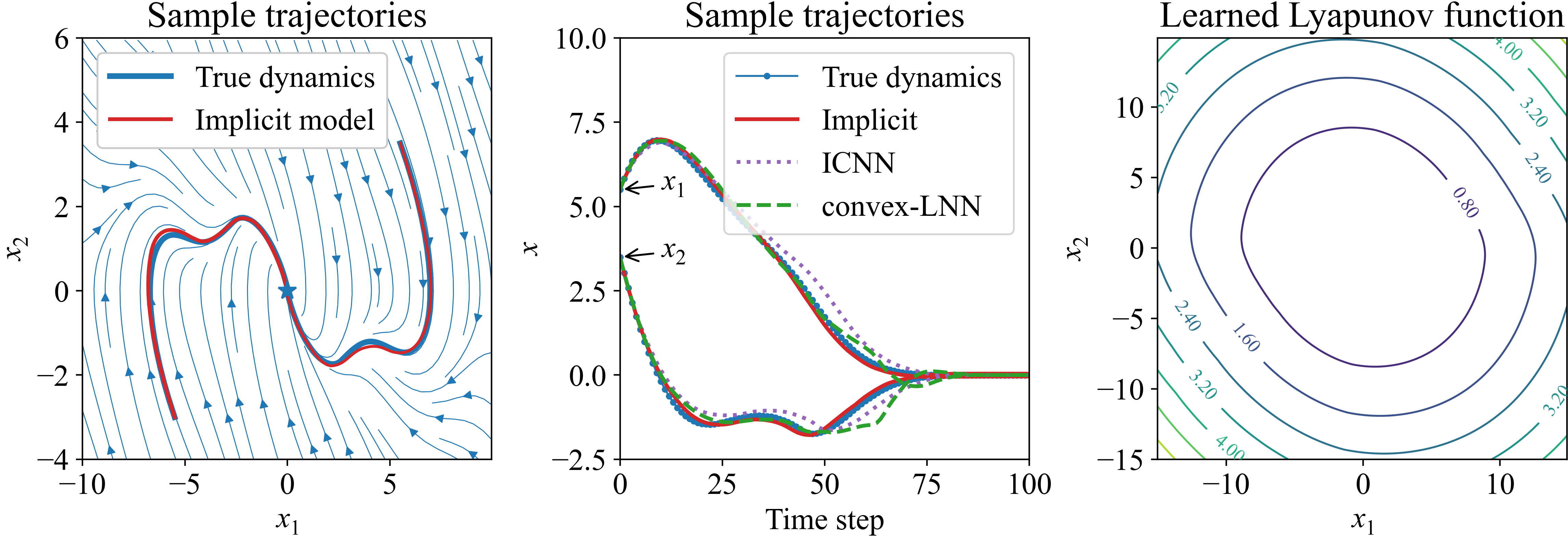}
  \caption{(Left) Continuous-time dynamics given with arrows underlying two bold discrete-time trajectories; (Middle) Sample trajectories corresponding to (non-)convex Lyapunov functions (Right) Learned $V$ for the implicit model method. Trajectories are discrete, but not dotted for clarity.}
  \label{fig:ex2}
\end{figure}

{\bf Example 3 (Nonlinear stochastic differential equation).} 
Consider the stable nonlinear stochastic differential equation defined in terms of independent standard Brownian motions $B_1,B_2$ as follows \citep{yin2015some}:
\begin{align}
\begin{split}
        d x_1 &= \left(\frac{-x_1}{\sqrt{\norm{\bm{x}}}} - x_1 + x_2 \right)\, dt + \sin(x_1)\, d B_1\\
        d x_2 &= \left(\frac{-x_2}{\sqrt{\norm{\bm{x}}}} - \frac{10}{3} x_2 + x_1 \right)\, d t + x_2\, d B_2.
\end{split}
\label{eq:ex3}
\end{align}
(Interpret $x_i/\sqrt{\norm{\bm{x}}}$ as $0$ at the origin.)
This example illustrates the inherent stability of our stochastic model even when only partial trajectories are used for training. We use $k=6$ mixtures and compare the performance of a convexity-based stable stochastic model against a standard MDN. In our experiment we discretize Eq. \eqref{eq:ex3} using a second-order stochastic Runge-Kutta method with time-step $h = 0.05$ (see \citep{roberts2012modify}). Tuples of the form $(\bm{x}_{t}, \bm{x}_{t+1})$ are used for training, so we use $\bm{x}_{t}$ in place of $\bm{\mu}_{t}$ (which is unavailable) with which we train $V$ through enforcing the condition $V(\bm{\mu}_{t+1}) \leq \beta V(\bm{x}_{t})$. However, the same scheme cannot necessarily be applied to entire roll-outs while ensuring stability. 

Figure \ref{fig:ex3} shows sample trajectories from both models as well as their performance. We show sample trajectories generated by each model alongside a representative sample from the true system. The smooth red lines show the mean dynamics (not the conditional means) as described in example 1. The performance plot shows the average negative log-likelihood (NLL) at each time step over 20 trajectories corresponding to initial values not seen during training. From Figure \ref{fig:ex3} we see the importance of stability as an inherent property of the dynamic model over a standard MDN.
\begin{figure}[tbh]
  \centering
  \includegraphics[width = .90\linewidth]{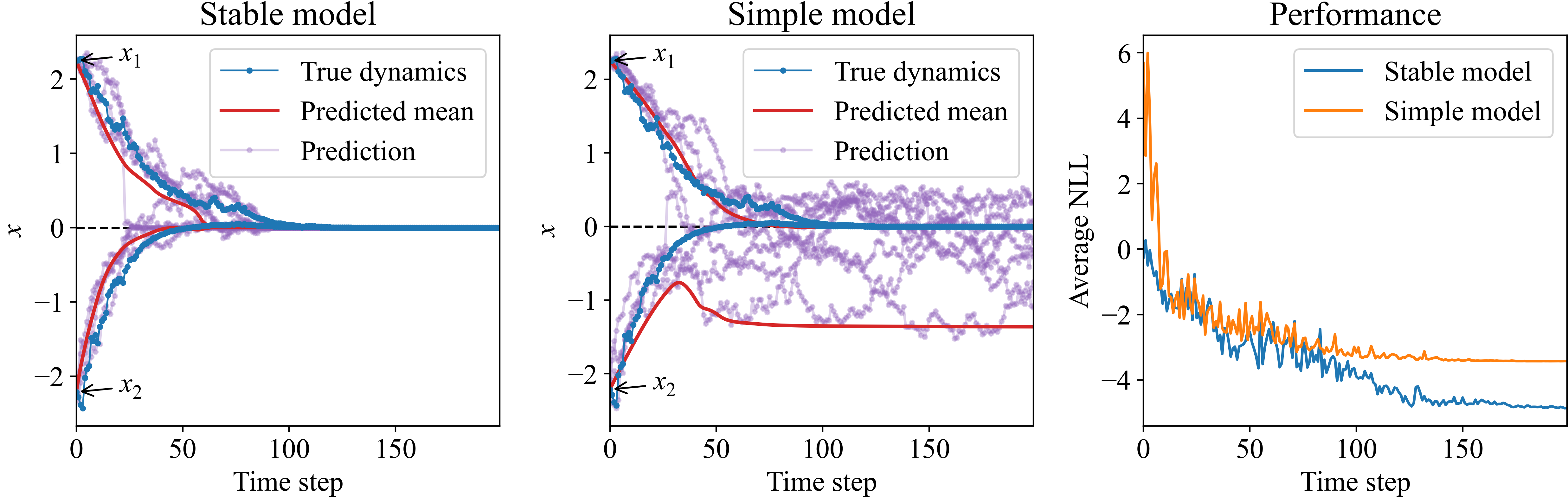}
  \caption{Sample paths of the system from example 3 and the learned model. The two initial values correspond to the two components of the state $(x_1,x_2)\in\RR^2$.}
  \label{fig:ex3}
\end{figure}

\section{Conclusion}

We have developed a framework for constructing neural network dynamic models with provable global stability guarantees. We showed how convexity can be exploited to give a closed-form stable dynamic model, then extended this approach to implicitly-defined stable models. The latter case can be reduced to a one-dimensional root-finding problem, making a robust and cheap implementation straightforward. Finally, we leverage these methods to the stochastic setting in which stability guarantees are also given through the use of MDNs. A proof of concept of these methods was given on several systems of increasing complexity. The simplicity of our approach, combined with the expressive capacity of DNNs, makes it a pragmatic tool for modeling nonlinear dynamics from noisy state observations. Moreover, interesting avenues for future work include applications to control and reinforcement learning.

\clearpage
\section*{Broader Impact}

Stability goes hand in hand with safety. Therefore, stability considerations are crucial for the broad acceptance of DNNs in real-world industrial  applications such as control, self-driving vehicles, or anaesthesia feedback, to name a few. To this end, industries or companies with sufficient computational and storage resources would benefit significantly through the use of such autonomous and interpretable technologies. However, this work mostly provides some theory and a proof of concept for stable DNNs in stochastic settings and as such does not pose a clear path nor an ethical quandary regarding such widespread control applications. 

\begin{ack}
We gratefully acknowledge the financial support of the Natural Sciences and Engineering Research Council of Canada (NSERC) and Honeywell Connected Plant.
\end{ack}

\small
\bibliography{neurips_2020}

\newpage
\appendix

\section{Continuity of implicit dynamic models}
\label{app:continuity}

\begin{figure}[t]
  \centering
  \includegraphics[width = .23\linewidth]{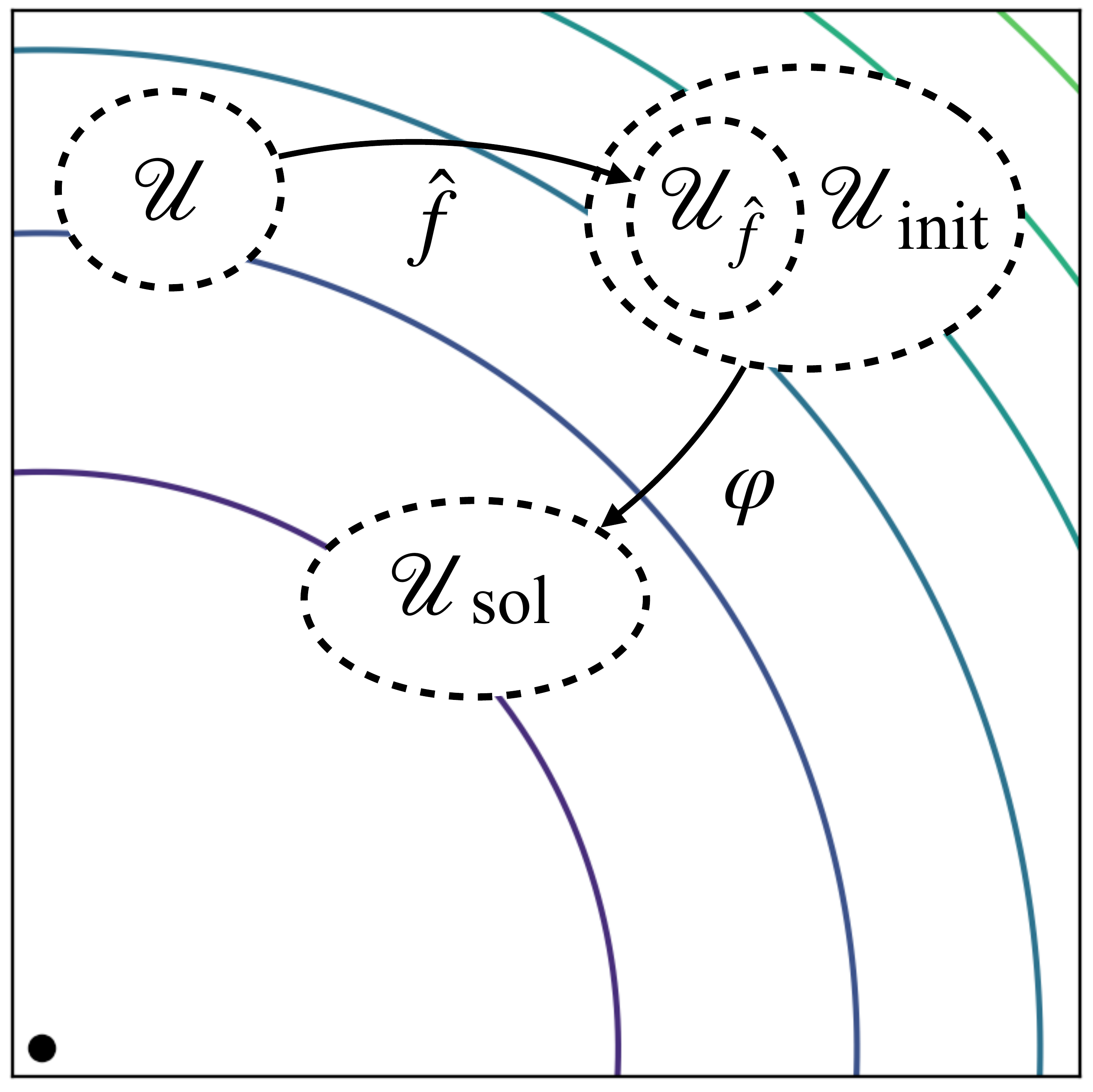}
  \caption{A schematic of the proof of Theorem \ref{thm:app_continuity}. $\varphi$ refers to a continuous function, derived from the implicit function theorem, taking $\hat{f}$ to states that decrease in $V$.}
  \label{fig:rootfind_cont}
\end{figure}
\begin{theorem}[\bf Stability and continuity of implicit dynamics]
Let $\hat{f} : \RR^n \to \RR^n$ be a nominal dynamic model and $V : \RR^n \to \RR$ be a candidate Lyapunov function. Assume $\hat{f}$ is locally Lipschitz continuous and $V$ is continuously differentiable. Further, assume for each fixed $\bm{x} \in \RR^n$ that the function $h \colon \RR \to \RR$ given by $h(\gamma) = V(\gamma \hat{f}(\bm{x}))$ satisfies $h' > 0$. Then the dynamics defined by Eq. \eqref{eq:det_stable_rootfind} are globally exponentially stable. Moreover, the model $f$ is locally Lipschitz continuous.
\label{thm:app_continuity}
\end{theorem}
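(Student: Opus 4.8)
The plan is to separate the two assertions. Exponential stability is essentially immediate, and the bulk of the work goes into local Lipschitz continuity, which I would obtain from a Lipschitz implicit function argument after rewriting $f$ in a single closed form valid across the switching surface.

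First I would record that for $\bm{x}\neq\bm{0}$ the scalar $\gamma^\star(\bm{x})$ of \eqref{eq:implicit_1D} is well defined and unique: the map $\gamma\mapsto h(\gamma)=V(\gamma\hat{f}(\bm{x}))$ is continuous with $h(0)=V(\bm{0})=0<\beta V(\bm{x})$, tends to $+\infty$ as $\gamma\to\infty$ whenever $\hat{f}(\bm{x})\neq\bm{0}$ by radial unboundedness of $V$, and is strictly increasing by the hypothesis $h'>0$; hence it has exactly one root, lying in $(0,1)$ precisely when $V(\hat{f}(\bm{x}))>\beta V(\bm{x})$, i.e.\ precisely on the ``otherwise'' branch of \eqref{eq:det_stable_rootfind}. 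For stability I would then note that in both branches $V(\bm{x}_{t+1})\le\beta V(\bm{x}_t)$ — in the first branch by its defining inequality, in the second by the definition of $\gamma^\star$ — so $V(\bm{x}_{t+1})-V(\bm{x}_t)\le-(1-\beta)V(\bm{x}_t)=-\alpha V(\bm{x}_t)$ along every trajectory. Since the candidate Lyapunov functions of Section \ref{sec:background} (with the additive $\epsilon\norm{\bm{x}}^2$ term) satisfy the polynomial sandwich $c_1\norm{\bm{x}}^a\le V(\bm{x})\le c_2\norm{\bm{x}}^a$ required by Theorem \ref{thm:exp_stoch_stable}, the deterministic analog of that theorem yields global exponential stability.

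For continuity the key reformulation is $f(\bm{x})=\min\{1,\gamma^\star(\bm{x})\}\,\hat{f}(\bm{x})$, valid for all $\bm{x}\neq\bm{0}$: on the first branch $\gamma^\star\ge1$ so $\min\{1,\gamma^\star\}=1$, on the second $\gamma^\star\in(0,1)$, and the two agree on the switching surface $\{V(\hat{f}(\bm{x}))=\beta V(\bm{x})\}$ where $\gamma^\star=1$. Because $\min\{1,\cdot\}$ is globally $1$-Lipschitz and $\hat{f}$ is locally Lipschitz (hence locally bounded), it suffices to prove $\gamma^\star$ is locally Lipschitz. Here I would apply a Lipschitz implicit function theorem to $g(\gamma,\bm{x})=V(\gamma\hat{f}(\bm{x}))-\beta V(\bm{x})$: fixing $\bm{x}_0$ with root $\gamma_0=\gamma^\star(\bm{x}_0)$, the partial $\partial_\gamma g(\gamma,\bm{x})=\langle\nabla V(\gamma\hat{f}(\bm{x})),\hat{f}(\bm{x})\rangle$ is continuous (as $V\in C^1$ and $\hat{f}$ is continuous) and equals $h'(\gamma_0)>0$ at $(\gamma_0,\bm{x}_0)$, so it is bounded below by some $m>0$ on a neighbourhood; meanwhile $\bm{x}\mapsto g(\gamma,\bm{x})$ is locally Lipschitz uniformly for $\gamma$ in a compact interval, since it is a composition of the $C^1$ map $V$ with the locally Lipschitz map $\bm{x}\mapsto\gamma\hat{f}(\bm{x})$. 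Subtracting the two equations $g(\gamma^\star(\bm{x}),\bm{x})=0=g(\gamma^\star(\bm{x}'),\bm{x}')$, inserting $\pm g(\gamma^\star(\bm{x}'),\bm{x})$, and applying the mean value theorem in $\gamma$ to the first difference gives a bound $|\gamma^\star(\bm{x})-\gamma^\star(\bm{x}')|\le\tfrac1m(L_V+\beta L_V')\norm{\bm{x}-\bm{x}'}$ near $\bm{x}_0$, so $\gamma^\star$, and therefore $f$, is locally Lipschitz. (Figure \ref{fig:rootfind_cont} is a schematic of this gluing.)

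I expect the delicate point to be exactly this last step: pointwise continuity across the switching surface is trivial from $\gamma^\star=1$, but the Lipschitz bound needs both the uniform representation $f=\min\{1,\gamma^\star\}\hat{f}$ and a Lipschitz — rather than $C^1$ — implicit function theorem, since $\hat{f}$ is only assumed locally Lipschitz. Minor care is also needed at points with $\hat{f}(\bm{x})=\bm{0}$ (there $\gamma^\star$ is undefined, but the first branch is active and $f(\bm{x})=\hat{f}(\bm{x})$, and such a point lies in the open region $\{V(\hat{f})<\beta V\}$ unless $\bm{x}=\bm{0}$) and at the origin, which is why Section \ref{subsec:nonconvexcase} assumes $\bm{x}_t\neq\bm{0}$; accordingly I would state the Lipschitz conclusion on $\RR^n\setminus\{\bm{0}\}$ and dispatch a neighbourhood of the origin by the bound $\norm{f(\bm{x})}\le\norm{\hat{f}(\bm{x})}$ if a global statement is wanted.
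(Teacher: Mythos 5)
Your proof is correct and follows the paper's overall strategy: uniqueness of the root from $h'>0$ and radial unboundedness, stability read directly off $V(\bm{x}_{t+1})\le\beta V(\bm{x}_t)$ on both branches, and continuity obtained by gluing the branches through a single saturated representation of the scaling factor (your $f=\min\{1,\gamma^\star\}\hat f$ is exactly the paper's $f=\texttt{sat}(\gamma^\star)\hat f$, since $\gamma^\star>0$). Where you genuinely diverge is the implicit-function step. The paper fixes a target value $V_{\text{target}}$ and applies the classical $C^1$ implicit function theorem in the \emph{image} variable, solving $V(\gamma\bm{y})-V_{\text{target}}=0$ for a $C^1$ function $\varphi(\bm{y})$ and then composing $\varphi\circ\hat f$ with the locally Lipschitz $\hat f$. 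You instead prove a Lipschitz estimate for $\gamma^\star$ directly in $\bm{x}$ by a mean-value argument on $g(\gamma,\bm{x})=V(\gamma\hat f(\bm{x}))-\beta V(\bm{x})$, using a local lower bound $\partial_\gamma g\ge m>0$ and a uniform-in-$\gamma$ Lipschitz bound in $\bm{x}$. Your route is more self-contained and handles cleanly the fact that the target $\beta V(\bm{x})$ itself varies with $\bm{x}$, a dependence the paper's proof leaves implicit (strictly, its $\varphi$ should be a function of the pair $(\hat f(\bm{x}),V(\bm{x}))$). The one detail you elide is that before applying the mean value theorem on the segment between $\gamma^\star(\bm{x})$ and $\gamma^\star(\bm{x}')$ you must know $\gamma^\star(\bm{x}')$ stays in the neighbourhood where $\partial_\gamma g\ge m$; this follows from uniqueness of the root together with the sign change of $g(\cdot,\bm{x}')$ at $\gamma_0\pm\delta$, so it is a fillable gap rather than a flaw. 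Your remarks about $\hat f(\bm{x})=\bm{0}$ and the origin are consistent with the paper's standing assumption $\bm{x}_t\neq\bm{0}$ (and note that the hypothesis $h'>0$ already forces $\hat f(\bm{x})\neq\bm{0}$).
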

 \begin{proof}
 For each $\bm{x}$ such that $V(\hat{f}(\bm{x})) > \beta V(\bm{x})$, there exists a unique solution to the equation
 \begin{equation}
     V(\gamma^{\star}\hat{f}(\bm{x})) - \beta V(\bm{x}) = 0
 \label{eq:app_implicit_1D}
 \end{equation}
 because $V$ is strictly increasing in all directions from the origin and is radially unbounded. Therefore, the implicit based dynamic model is defined everywhere in $\RR^n$. It is then clear, by construction, that the implicit approach yields exponentially stable discrete-time dynamics in the deterministic sense of Theorem \ref{thm:exp_stoch_stable}. In practice, we find the root such that $\norm{V(\bm{x}_{t+1}^{\star}) - \beta V(\bm{x}_t)} < \epsilon$, where $\epsilon > 0$ is a pre-defined tolerance. If the tolerance is set such that $\epsilon \leq V(\bm{x}_t)(1-\beta)$ then the model is still stable (not necessarily exponentially) subject to small numerical error.
 
Now we show the implicit method is continuous. That is, close initial values find close roots. Geometrically, this is not surprising and follows from the implicit function theorem. Fix any positive target value $V_{\text{target}}$ (for instance, $V_{\text{target}} = \beta V(\bm{x})$ for a given $\bm{x}$). Let $\bm{x}^{(0)}$ be such that $V(\bm{x}^{(0)}) > V_{\text{target}}$. We are then interested in the following equation of $n+1$ variables
\begin{equation}
    V(\gamma \bm{x}^{(0)}) - V_{\text{target}} = 0.
    \label{eq:implicit}
\end{equation}
From the above discussion we know there is a $\gamma^{\star} \in (0,1)$ that satisfies Eq. \eqref{eq:implicit}. 
Recall $h$, as defined in our hypotheses, is non-stationary at $\gamma^\star$. Therefore, by the implicit function theorem, there exists some neighborhood $\mathcal{U}$ of $\bm{x}^{(0)}$ such that there is a continuously differentiable function $\varphi : \mathcal{U} \to \RR$ satisfying $\varphi(\bm{x}^{(0)}) = \gamma^{\star}$ and
\begin{equation}
     V(\varphi(\bm{x}^{(0)}) \bm{x}^{(0)}) - V_{\text{target}} = 0
\end{equation}
for all $\bm{x}^{(0)} \in \mathcal{U}$. This establishes continuity in $\gamma^{\star}$ over a neighborhood of any initial iterate $\bm{x}^{(0)}$. 

Now, fix any $\bm{x}$ such that the implicit method is needed ($\bm{x}$ does not decrease sufficiently in $V$). Let $\gamma^{\star} \hat{f}(\bm{x})$ be a solution satisfying Eq. \eqref{eq:implicit} and define a neighborhood $\mathcal{U}_{\text{sol}}$ around $\gamma^{\star} \hat{f}(\bm{x})$. Let $\mathcal{U}_\text{init}$ be a neighborhood around $\hat{f}(\bm{x})$ as in the previous paragraph (to ensure continuity from $\mathcal{U}_\text{init}$ into $\mathcal{U}_\text{sol}$. By continuity of $\hat{f}$ and $V$, there is a neighborhood $\mathcal{U}$ in $\RR^n$ such that $\hat{f}(\bm{x}) \in \mathcal{U}_\text{init}$ for all $\bm{x} \in \mathcal{U}$, namely, some neighborhood $\mathcal{U}_{\hat{f}} \subset \mathcal{U}_\text{init}$. Consequently, $\varphi(\hat{f}(\bm{x})) \hat{f}(\bm{x}) \in \mathcal{U}_\text{sol}$ for $\bm{x} \in \mathcal{U}$. That is, $\mathcal{U}$ is a neighborhood of the domain which maps into $\mathcal{U}_{\text{sol}}$. Moreover, since $\varphi$ is locally Lipschitz (because it is continuously differentiable), the implicit method is also locally Lipschitz. This follows by further restricting $\mathcal{U}_{\hat{f}}$ and $\mathcal{U}$ to smaller neighborhoods in which $\hat{f}$ and $\varphi$ satisfy the Lipschitz condition.

Finally, the entire dynamic model $f$ is locally Lipschitz continuous. Indeed, the above argument does not depends on the aforementioned restriction of the roots to $(0,1)$. Instead, for any $\bm{x}$ we can perform the implicit method and retain the local Lipschitz continuity described above. Therefore, the entire dynamic model can be written as\footnote{Of course, $f$ is not implemented in this form.}
\begin{equation}
    f(\bm{x}) = \texttt{sat}(\gamma^{\star}) \hat{f}( \bm{x} ),
\end{equation}
where $\gamma^\star > 0$.\\
\end{proof}

\section{Training implicit dynamic models}
\label{app:implicit_train}

The following equations are only needed when the nominal model does not decrease sufficiently in $V$, otherwise standard backpropagation applies.
The following are direct consequences of the implicit function theorem or implicit differentiation, for example, as in \cite{zhang2020implicitly} or \cite{bai2019deep} respectively. In both cases presented below, we assume $\hat{f}$ and $V$ satisfy the hypotheses of Theorem \ref{thm:app_continuity} and that $\mathcal{L} : \RR^n \times \RR^n \to \RR$ is a differentiable loss function. Moreover, we define the scalar-valued function $g(\gamma) = V(\gamma \hat{f}(\bm{x})) - T(\bm{x})$, where $T$ defines a target value (e.g., $\beta V(\bm{x}))$. Recall $\bm{x}_{t+1}^{\star} = \gamma^{\star} \hat{f}(\bm{x}_t)$.

{\bf Direct calculation.}\quad The gradient of the loss $\mathcal{L}$ with respect to ($\cdot$) is given by:
\begin{equation}
    \pdv{\mathcal{L}}{(\cdot)} = \pdv{\mathcal{L}}{\bm{x}_{t+1}^{\star}}\pdv{\bm{x}_{t+1}^{\star}}{(\cdot)},
\end{equation}
where
\[
\pdv{\bm{x}_{t+1}^{\star}}{(\cdot)} =
\begin{cases}
     \gamma^\star I_{n \times n} - \dfrac{1}{\nabla V( \gamma^\star \hat{f}(\bm{x}))^T \hat{f}(\bm{x})} \hat{f}(\bm{x}) \pdv{g}{\hat{f}}(\bm{x}) &\text{if}\quad (\cdot) = \hat{f}(\bm{x}) \\
     \dfrac{1}{\nabla V( \gamma^\star \hat{f}(\bm{x}))^T \hat{f}(\bm{x})} \hat{f}(\bm{x}) &\text{if}\quad (\cdot) = T(\bm{x}).
\end{cases}
\]

{\bf Fixed point approach.}\quad
We use the notation $\gamma^{(i+1)} = F(\gamma^{(i)}; \bm{x}_t) \equiv \gamma^{(i)} - g(\gamma^{(i)}) / g'(\gamma^{(i)})$ to denote the standard scalar Newton iteration. The superscripts denote the iteration. Therefore, $\gamma^{(i+1)} \in \RR$, is a candidate scaling term for $\hat{f}(\bm{x}_t)$ following $\bm{x}_t$ at iteration $i+1$ in the procedure.

Let $\gamma^{\star} = F(\gamma^{\star}; \bm{x}_t)$. Then the gradient of the loss with respect to ($\cdot$) is given by:
\begin{equation}
    \pdv{\mathcal{L}}{(\cdot)} = \pdv{\mathcal{L}}{\bm{x}_{t+1}^{\star}}\pdv{\bm{x}_{t+1}^{\star}}{(\cdot)}
    \label{eq:thm_chain}
\end{equation}
Since $\bm{x}_{t+1}^{\star} = \gamma^{\star} \hat{f}(\bm{x}_t)$, we only need to compute $\partial \gamma^{\star}/ \partial (\cdot)$ then the rest follows by the product rule.

To this end, we have 
\begin{align}
    \pdv{\gamma^\star}{(\cdot)} &= \pdv{F(\gamma^{\star}; \bm{x}_t)}{(\cdot)} + \underbrace{\pdv{F(\gamma^{\star}; \bm{x}_t)}{\gamma^\star}}_{0} \pdv{\gamma^\star}{(\cdot)}\\
    &= \pdv{F(\gamma^{\star}; \bm{x}_t)}{(\cdot)}.
\end{align}

\section{Experiment with chaotic systems}
\label{app:chaotic}

Before we show an example with the Lorenz attractor, we introduce a new model structure. The problem of ensuring stability of a dynamic model is closely related to the problem of minimizing the Lyapunov function $V$ through an iterative process. In particular, we require that the dynamic model traverses $V$ in a descent direction. In order to ensure this condition, we consider the case where an increment between time steps does not move in a descent direction:
\begin{equation*}
    \nabla V(\bm{x}_t)^T \big( \hat{f}(\bm{x}_t) - \bm{x}_t \big) > 0
\end{equation*}
at some time $t$. In this case, we can project the dynamics $\hat{f}(\bm{x}_t)$ onto the gradient $\nabla V(\bm{x}_t)$ to get dynamics normal to the gradient:
\begin{equation*}
f(\bm{x}_t) \leftarrow \hat{f} (\bm{x}_t) - \nabla V(\bm{x})^T \big( \hat{f}( \bm{x}_t ) - \bm{x}_t \big) \frac{\nabla V(\bm{x}_t)}{\norm{\nabla V(\bm{x}_t)}^2}.
\end{equation*}
In closed form, we can write
\begin{align}
\begin{split}
    \bm{x}_{t+1} &= f( \bm{x}_t )\\
    & \equiv
    \begin{cases}
    \hat{f}(\bm{x}_t) &\text{if } \nabla V(\bm{x}_t)^T \big( \hat{f}(\bm{x}_t) - \bm{x}_t \big) \leq 0\\
    \hat{f} (\bm{x}_t) - \nabla V(\bm{x})^T \big( \hat{f}( \bm{x}_t ) - \bm{x}_t \big) \frac{\nabla V(\bm{x}_t)}{\norm{\nabla V(\bm{x}_t)}^2} &\text{otherwise}
    \end{cases}\\
    &= \hat{f} (\bm{x}_t) - \texttt{ReLU} \big( \nabla V(\bm{x})^T \big( \hat{f}( \bm{x}_t ) - \bm{x}_t \big) \big) \frac{\nabla V(\bm{x}_t)}{\norm{\nabla V(\bm{x}_t)}^2}.
\end{split}
\label{eq:model_noninc}
\end{align}
This model does not guarantee stability, but can be utilized in an interesting way, as we demonstrate below.

{\bf Example 4 (chaotic system).} The Lorenz attractor is a chaotic system, making it an interesting benchmark for DNN dynamic models. It's dynamics are given as follows:
\begin{align}
\begin{split}
    \dot x &= \sigma \big(y - x\big)\\
    \dot y &= x \big( \rho - z\big) - y\\
    \dot z &= x y - \beta z.
\end{split}
\label{eq:lorenz}
\end{align}
Depending on its parameters, it may have a non-zero equilibrium or enter a limit cycle. For modeling a system with a non-zero equilibrium, we could use a priori knowledge of such a point $\bm{x}^{\star}$ then employ a variable shift $\bm{x}_{t+1} - \bm{x}^{\star}$. Instead, we consider a dynamic model with the integrating structure:
\begin{equation}
    \bm{x}_{t+1} = \bm{x}_{t} + f(\bm{x}_t).
\end{equation}
In particular, $f$ is given by Eq. \eqref{eq:model_noninc}. Geometrically, this gives a state-dependent constraint on the direction and magnitude of the increment between time steps. This is useful for systems that are sensitive to small shifts in the state space. 

In our experiments, we use a fourth order Runge-Kutta method to discretize Eq. \eqref{eq:lorenz}. We use the standard parameters for the system: $\sigma = 10$, $\beta = 8/3$, $\rho = 28$. The models were trained on a single trajectory of 3,000 time steps (initial condition [1, 1, 1]) and the figures shown give their respective roll-outs for a slightly perturbed initial condition.

\begin{figure}[tbh]
  \centering
  \includegraphics[width = .85\linewidth]{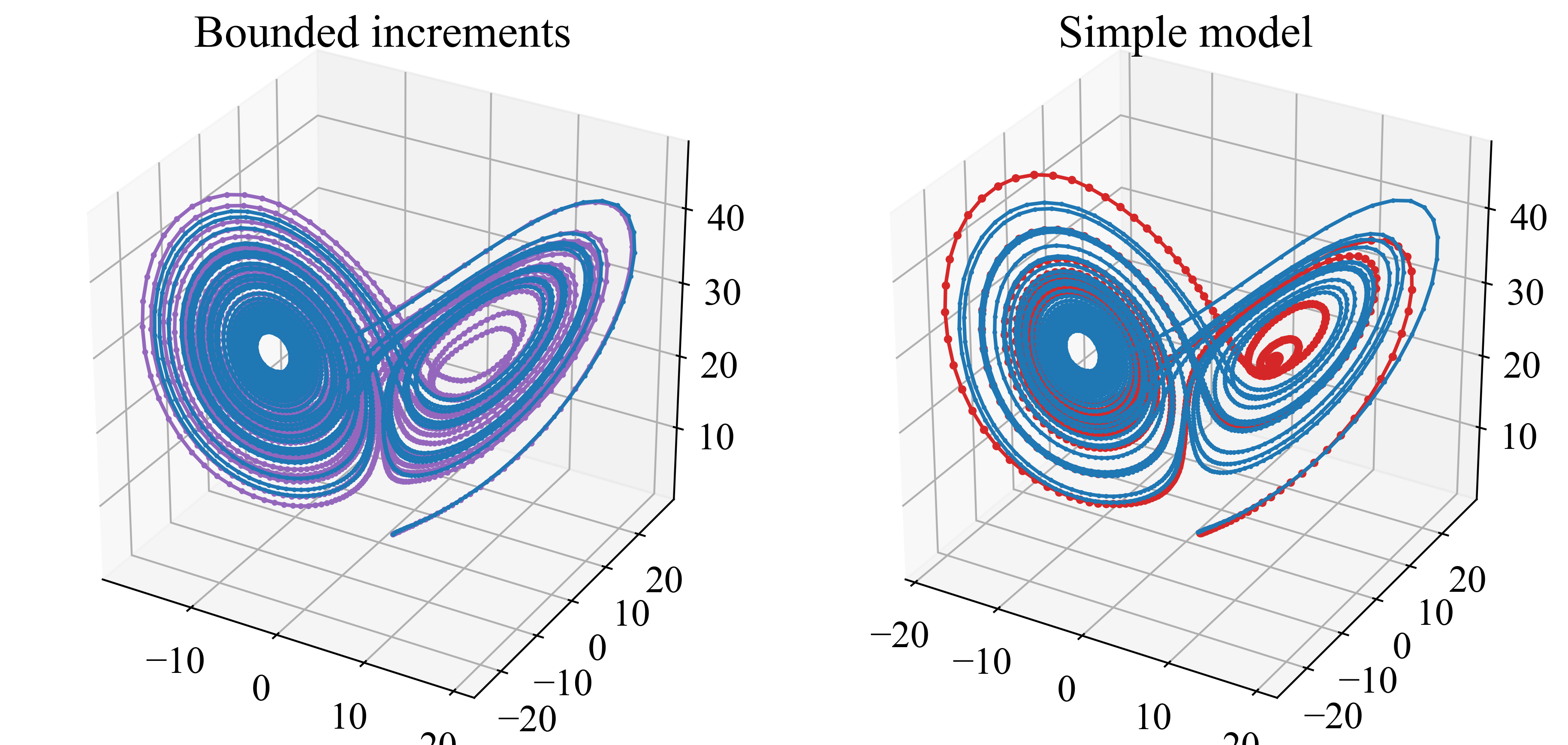}
  \caption{Side-by-side comparison of a 3000 time step roll-out between a bounded increments model and an unconstrained counterpart.}
  \label{fig:lorenz}
\end{figure}

\begin{figure}[tbh]
  \centering
  \includegraphics[width = .85\linewidth]{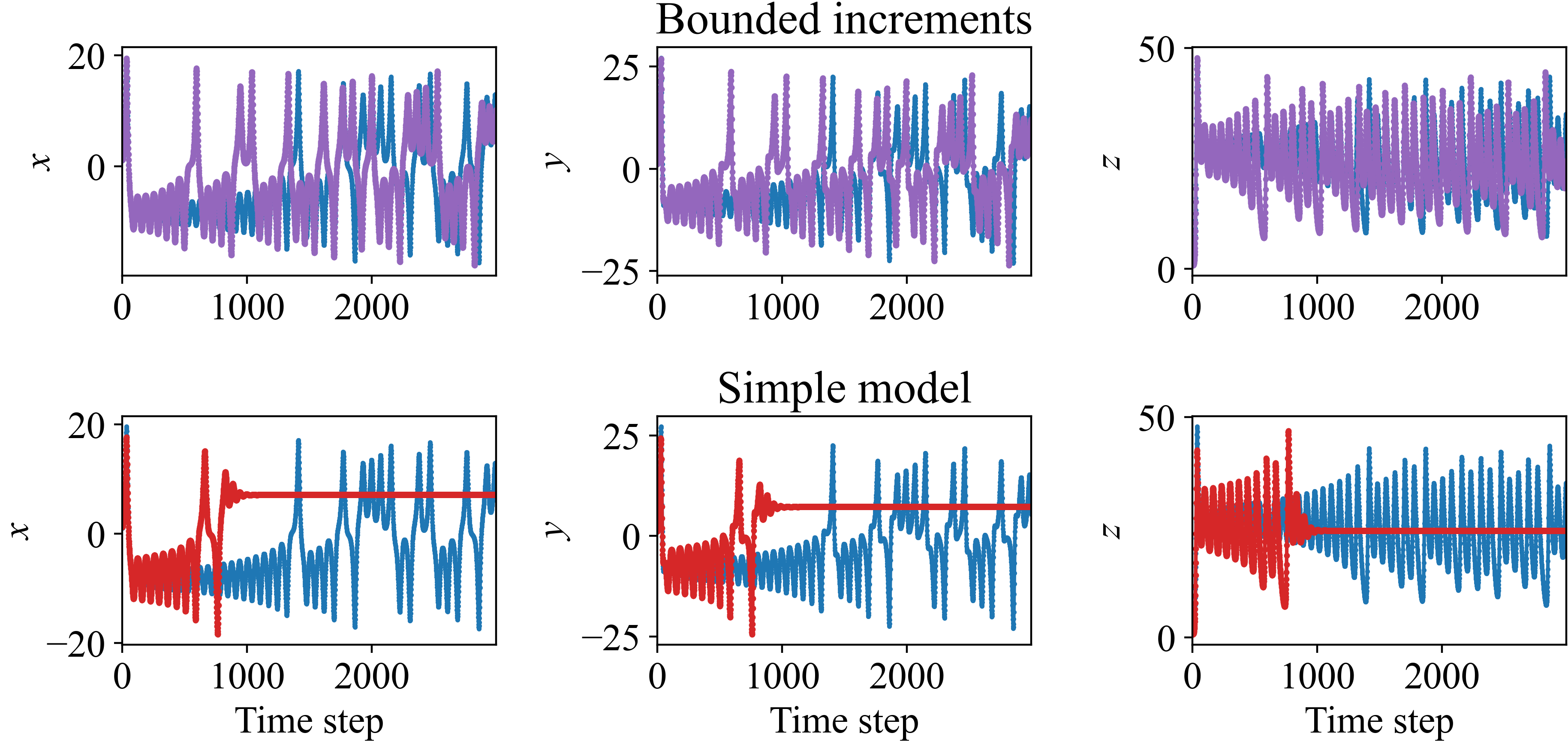}
  \caption{A view of each individual state for both models.}
  \label{fig:lorenz_grid}
\end{figure}

\newpage
\section{ML Reproducibility}
\label{app:exp_details}

We summarize \emph{additional} details that are not given in the main body of the paper. 
\subsection{Models \& Algorithms}

In all experiments, $\hat{f}$ is a $n$-25-25-$\ell$ fully connected feedforward network, where $\ell$ is either $n$ or $2nk$, where $k$ is the number of mixtures in an MDN. For ease, a separate network is used to output the mixture coefficients. It is worth noting that $\hat{f}$ may be given by any parametric/differentiable representation of the form $\bm{x}_{t+1} = \hat{f}(\bm{x}_t, \bm{\omega}_{t+1})$. Therefore, other architectures such as convolutional neural networks and regularization techniques such as dropout are applicable. $V$ can be either an ICNN or Lyapunov neural network as described in section \ref{sec:background}, in either case it uses a $n$-25-25-1 fully connected network. $V$ uses a custom activation proposed in \cite{manek2019learning} that gives a smooth approximation of \texttt{ReLU}. The term $\beta$ associated with the rate of decrease of $V$ was set to $0.99$, as it only needs to be a value between 0 and 1, but closer to 1 gives more flexibility. The only component that requires some complexity analysis is the implicit dynamics method. For the root-finder we set the error tolerance to be $0.001$. Since we combine Newton's method with the bisection method, this requires at most 10 bisection steps. Since the bisection method is a ``back up'', this would mean Newton's method was also executed for 10 steps but deviated from the current (or initial) interval given by the bisection method at each step. In the case $V$ is convex, Newton's method is guaranteed to converge (so, bisection method is not used) starting at $\gamma^{(0)} = 1$, since this value lies to the right of the zero of the increasing convex function $g(\gamma) = V(\gamma \hat{f}(\bm{x}_t)) - \beta V(\bm{x}_t)$. This is for time steps at which the root-finder is needed, otherwise we only execute the forward pass of $\hat{f}$.



\subsection{Datasets}

The data can be generated using the given dynamical systems and Runge-Kutta schemes. In all experiments, we gather training data by recording tuples of the form $(\bm{x}_{t}, \bm{x}_{t+1})$ from trajectories corresponding to a grid ($14 \times 14$ equally spaced points in the interval $[-6, 6] \times [-6, 6]$) of initial values. The first two examples use trajectories of 40 time steps, while example 3 uses trajectories of 10 time steps. We did not pre-process the data because the models are describing a dynamical system. Models are evaluated based on average error (mean squared error or negative log-likelihood) across time steps over 20 trajectories corresponding to new initial values. 



\subsection{Experimental Results}

We implemented our models using PyTorch \cite{paszke2019pytorch} and its default parameter initializations. Parameters are updated using Adam \cite{kingma2014adam} to minimize the mean squared error or negative log-likelihood. We did not optimize the hyper-parameters. The default hyper-parameters for Adam were satisfactory for all experiments, but possibly required training for longer. We used the slightly larger learning rate $0.0025$ and trained for $200-1000$ epochs. All experiments were ran on a laptop with a Quad-Core i7 processor and 16GB of RAM. Runtime was negligible for convexity based-models (as it is simply the forward/backward pass of $\hat{f}$ and $V$), but for the implicit method it is approximately 1.5-3 times slower depending on the level of accuracy in the root-finder.

\end{document}